\newtheorem{proposition}{Proposition}
\newtheorem{assumption}{Assumption}
\definecolor{shadecolor}{gray}{0.9}
\newcommand{\bfv}{\mathbf{v}}
\newcommand{\comm}[1]{}
\begin{document}
\emergencystretch 3em

\title{Sequential Changepoint Detection in Neural Networks with Checkpoints
}


\date{}

\author{Michalis K. Titsias  \\
 DeepMind \\ 
   mtitsias@google.com 
  \and
        Jakub Sygnowski  \\
         DeepMind  \\
          sygi@google.com
        \and  Yutian Chen \\
         DeepMind \\ 
           yutianc@google.com
}

\maketitle

\begin{abstract}
We introduce a framework for online changepoint detection and simultaneous model learning which is applicable to highly parametrized models, such as deep neural networks. 
It is based on detecting changepoints across time by sequentially performing generalized likelihood ratio tests that require only evaluations of simple prediction score functions. This procedure makes use of checkpoints, consisting  of early versions  of the actual model parameters, that allow to detect distributional changes by performing predictions on future data.  We define an algorithm that bounds the Type I error in the sequential testing procedure. We demonstrate the efficiency of our method in challenging continual learning applications with unknown task changepoints, and show improved performance compared to online Bayesian changepoint detection. 
\end{abstract}

\section{Introduction} 

Online changepoint detection is concerned with the problem of sequential detection of distributional changes in data streams, as soon as such changes occur. It can have numerous 
applications ranging from statistical process control, e.g.\ financial times series and medical conditioning monitoring  \citep{Hawkins2003,Bansal2002,Aminikhanghahi:2017,truong2018selective}, to  
problems in machine learning which can involve training very complex and highly parametrized models from a sequence of learning tasks \citep{ring1994continual,robins1995catastrophic,schmidhuber2013powerplay,kirkpatrick2017overcoming}. In this latter application, referred to as continual learning, it is often desirable to train online from a stream of  observations a complex neural network and simultaneously 
detect changepoints that quantify when a task change occurs.    

However, current algorithms for simultaneous 
online learning and changepoint 
detection are not well suited for models 
such as neural networks that can have 
millions of adjustable parameters.  
\comm{
In recent years continual learning 
can involve training a deep neural network from a sequence of different tasks, e.g.\
supervised regression or classification problems
, most current continual learning 
methods typically assume known task changepoints, i.e.\ any time the 
 data distribution changes is given as a ground-truth to the neural network. Instead, here we would like to remove this latter unrealistic assumption  
and develop changepoint detection algorithms that are generally suited for detecting changes when using complex models such as deep neural networks.  
}
For instance, while state of the art Bayesian online changepoint detection algorithms have been developed \citep{Fearnhead2006,Fearnhead2007,Adams07bayesianonline,Caron2012,Sinam13},
 such techniques can be computationally too 
expensive 
to use along with neural networks.  
This is because they are based on Bayesian inference procedures that require selecting suitable 
priors for all model parameters and they rely on applying
accurate online Bayesian inference 
which is generally intractable, unless the model has a simple conjugate form. For instance, the popular techniques in  
\citet{Fearnhead2007,Adams07bayesianonline} 
are tested in simple Bayesian conjugate models where exact integration over the parameters
is feasible. Clearly, such Bayesian computations 
are intractable or very expensive for highly non-linear models such as neural networks, which can contain millions of parameters.

\comm{ 


Continual or life-long learning refers to algorithms 
that learn in an online fashion from streams of data  
that can be
associated with a sequence of tasks \cite{ring1994continual,robins1995catastrophic,schmidhuber2013powerplay,goodfellow2013empirical}.   
Modern applications of continual learning often involve training very complex models having large number of parameters, such as 
deep neural networks.
 While such models are very flexible, they can  
face many challenges when trained  sequentially  
such as dealing with catastrophic forgetting, where predictive 
performance on early tasks can deteriorate as new tasks are encountered
%
\cite{kirkpatrick2017overcoming,nguyen2017variational,rusu2016progressive,li2017learning,farquhar2018towards}.  
%
Furthermore,  a limitation of most current methods is that they assume known task boundaries, which  means that the learning algorithm knows precisely when 
 task switches occur. 
 To overcome this latter limitation it would be highly  desirable  to generalize  current continual learning methods by enhancing them with mechanisms that can allow for automatic task boundary detection. 
 
 }

In this article, we wish to develop a framework for joint sequential changepoint detection and online  model fitting, that could be easily applied to arbitrary systems and it will be particularly suited for highly parametrized models such as neural networks. 
The key idea we introduce is to 
sequentially perform 
statistical hypothesis testing by evaluating 
predictive scores under 
cached model checkpoints. Such checkpoints 
are periodically-updated 
copies of the model parameters
and they are used to detect distributional 
changes by performing predictions on future/unseen data (relative to the checkpoint), i.e.\ on data observed  
after a checkpoint and up to present time. An 
illustration of the approach is given 
by Fig.\ \ref{fig:illustration}, while 
detailed description of the method is given in
Section \ref{sec:checkpoints} and Algorithm \ref{alg:checkpoint_detect}.  
In statistical testing for change detection we use generalized 
likelihood ratio tests \citep{csorgo1997limit,jandhyala2002detection} 
where we bound the Type I error (false positive 
detection error) during the sequential testing process. 
The overall algorithm is easy to use and it requires 
by the user to specify two main hypeparameters: the desired bound on the Type I error and the testing window size between a checkpoint and the present time.   

%

\begin{figure*}[!htb]
	\centering
	\begin{tabular}{ccc}
        {\includegraphics[width=0.62\linewidth]  
			{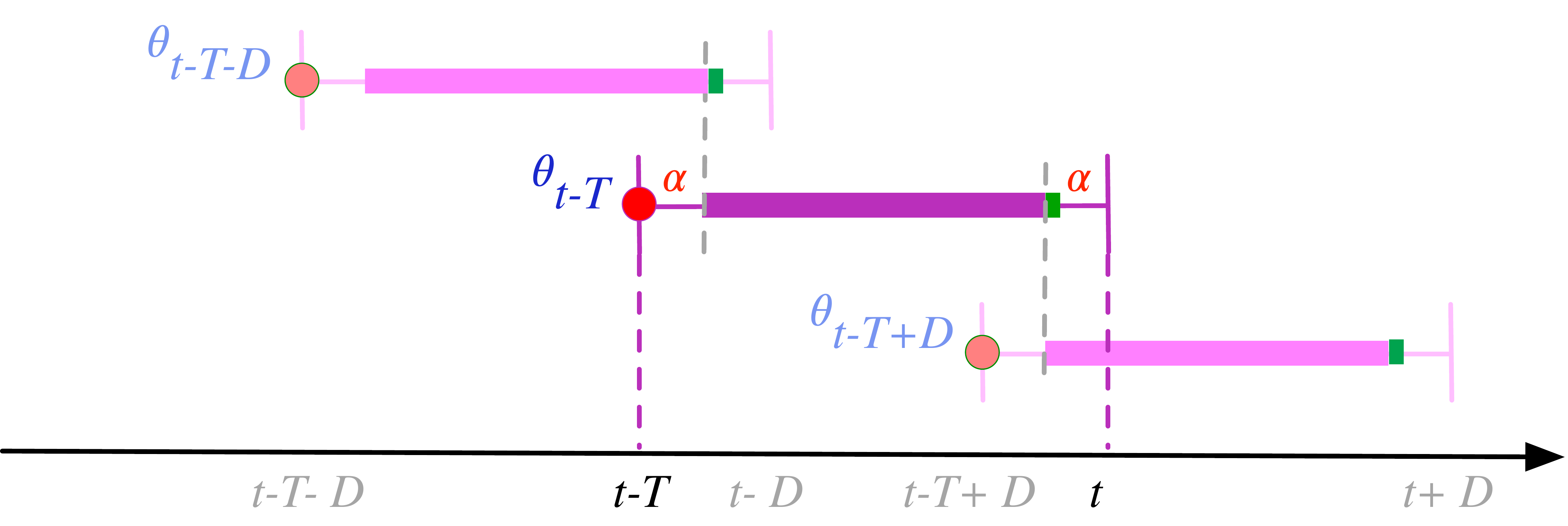}} &
		{\includegraphics[width=0.31\linewidth]  
		 	{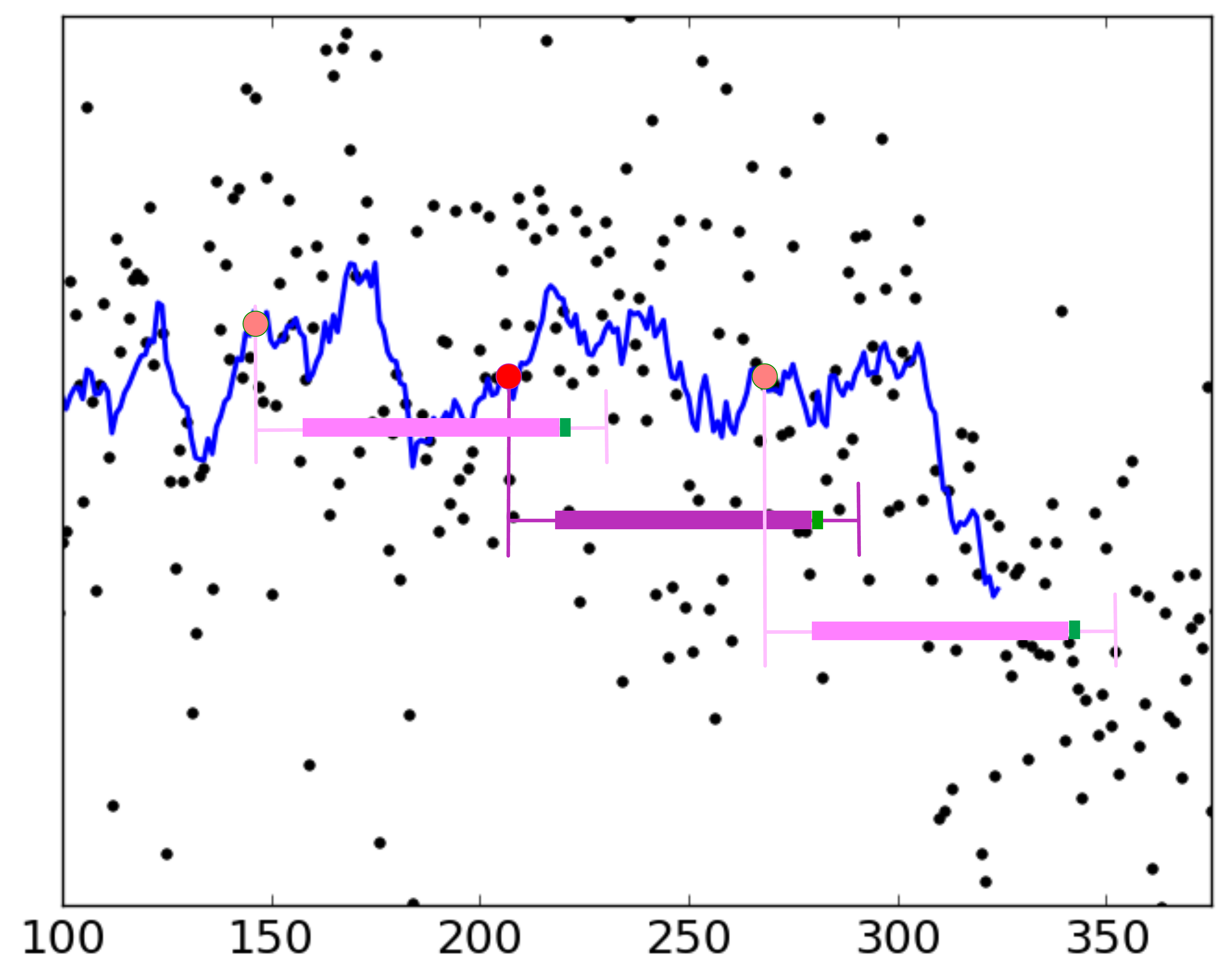}} \\ 
		 	(a) & (b)
	\end{tabular}
	\caption{
	({\bf a}) Visualization of Algorithm 1. 
	Model checkpoints are cached parameters stored (and deleted)
	periodically every $D = T - 2 \alpha$ time steps: 
	$\theta_{t-T-D}, \theta_{t-T}, \theta_{t-T+D}$. 
	Every $D$ steps (e.g.\ $t-D$, $t$, $t+D$) the checkpoint that lags $T$ iterations
	performs the statistical test. E.g.\ given 
	current time $t$ the active checkpoint is $\theta_{t-T}$
	(see highlighted magenta segment)
	which tests for a changepoint in the region $(t-T+\alpha, t-\alpha]$ where $\alpha$ is the minimum sample size in 
	each segment; see Section \ref{sec:offlinedetection}. 
	({\bf b}) An example in a time series dataset (black dots) where the model is a moving average parameter shown by the blue line and the checkpoints are cached values of the moving average indicated by the coloured circles.  	
	} 
	\label{fig:illustration}
\end{figure*}

\comm{
In this paper, we introduce a general approach that can detect task switches while the model is constantly trained online. Such an approach is applicable to arbitrary systems ranging from traditional few-parameter models to deep neural networks having millions of parameters. 
It is based on performing 
statistical tests requiring 
only evaluations of the loss function used in training. Such tests are performed by comparing the  actual model prediction against the predictions obtained from stored checkpoints, which are early 
copies 
of the model parameters updated periodically. We provide theoretical results showing that the proposed procedure is grounded on a principled log-likelihood ratio and KL divergence statistical testing framework. 
 The most effective checkpoint based algorithm, that we introduce in 
 Section \ref{sec:checkpoints}, will make use 
 of two alternating checkpoints  resulting in a robust 
 statistical testing and detection scheme.  
 Figure \ref{fig:illustration} illustrates this algorithm in an one-dimensional time series dataset. 
}

We demonstrate the efficiency of our method on time 
series as well as on continual learning of neural networks from a sequence of supervised tasks with unknown task changepoints. For these challenging continual learning
problems we also consider a strong baseline for comparison, by using a variant of the Bayesian online changepoint detection algorithm by \citet{Adams07bayesianonline} 
that is easily applicable to complex models. This is done by applying the Bayesian algorithm
on data that correspond to predictive scores provided by the neural network during online training.  
Our proposed method consistently 
outperforms this and others baselines (see Section \ref{sec:experiments}), which 
shows that model checkpoints provide an easy to use and simultaneously  
effective technique for changepoint detection in online learning of complex models. 

The paper is organized as follows. Section \ref{sec:setup} introduces the problem of changepoint 
detection and online model learning. Section \ref{sec:checkpoints} develops our framework  
for changepoint detection using checkpoints 
and Section \ref{sec:continual} considers 
applications to continual learning with neural 
networks. Section \ref{sec:related} discusses 
related work, while Section \ref{sec:experiments} 
provides numerical experimental results and comparisons with other methods. The Appendix contains further details about the method and additional  
experimental results.

\section{Problem Setup
\label{sec:setup}}

\subsection{Streaming Data with Unknown Changepoints}

We consider the online learning problem with unknown changepoints in
a stream of observations $\{y_t\}_{t \geq 0}$. Each $y_t$   
includes an input vector and possibly additional outputs 
such as a class label or a real-valued response. 
For instance, in a supervised learning setting each observation takes the form  $y_t \equiv (x_t,c_t)$ where $x_t$ is the input vector and $c_t$ 
the desired response such as a class label, while in unsupervised learning $y_t$ is an input vector alone, i.e.\ $y_t \equiv x_t$. 
In addition, for many applications, e.g.\ in deep learning \citep{lecun2015deeplearning}, $y_t$ can be a small set or  mini-batch of individual i.i.d.\ observations, i.e.\ $y_t = \{y_t^i\}_{i=1}^b$,
that are received simultaneously at time $t$.

In the generation process of $\{y_t\}_{t \geq 0}$ we assume that there exist certain times, referred to as \emph{changepoints} and denoted by $\{\tau_k \}_{k=1,2,\ldots}$, that result in abrupt changes in data distribution so that $y_{t \in [0, \tau_1)} \overset{iid}{\sim} 
\mathcal{P}_1, y_{t \in [\tau_{1}, \tau_2)} \overset{iid}{\sim}
\mathcal{P}_2$ and in general 
\begin{equation}
y_{t \in [\tau_{k-1}, \tau_k)} \overset{iid}{\sim} 
\mathcal{P}_k, \quad  k=1,2,\ldots
\label{eq:datasegment}
\end{equation}
where $\mathcal{P}_{k-1} \neq \mathcal{P}_k$ and with the convention $\tau_0=0$. Each $\mathcal{P}_k$ is the  segment
or task-specific distribution that generates the 
$k$-th data segment. These 
assumptions are often referred to as partial exchangeability or the product partition model \citep{Barry1992}.
To learn from such data we wish to devise  schemes that can adapt online a parametrized model  
without knowing the changepoints $\tau_k$
and the distributions $\mathcal{P}_k$.   
Accurate sequential detection of changepoints can be useful since, knowing them, the learning system can dynamically decide to switch to a different parametric model or add new parameters to a shared model and etc.  
In Section \ref{sec:checkpoints}, 
we introduce a general online learning
and changepoint detection algorithm  
suitable for arbitrary models ranging from simple single-parameter models 
to complex deep networks having millions of parameters.

\subsection{Online Model Learning with Changepoints}

We consider a probabilistic 
model $p(y|\theta)$ with parameters $\theta$ that we wish 
to train online 
and simultaneously use it to detect 
the next changepoint $\tau_k$.
Online training of $\theta$ means that for each observation $y_t$ 
we perform, for instance, a gradient update
\begin{equation}
\theta_t \leftarrow \theta_{t-1} - \rho_{t} \nabla \ell(y_t; \theta_{t-1}),
\label{eq:updaterule}
\end{equation}
where $\rho_t$ is the step size or learning rate. 
Given the non-stationarity of the learning problem this sequence should not satisfy the 
Robbins-Monro conditions \citep{robbins1951} and, for instance, $\rho_t$ could be constant through time.
The loss function in Eq.\ \eqref{eq:updaterule} is typically the negative log-likelihood function, i.e.\ 
$$
\ell(y_t;\theta_{t-1}) = - \log p(y_t | \theta_{t-1})
$$
and $\theta_t$ denotes the parameter values 
after having seen $t$ observations including the most recent $y_t$.
We will refer to 
the evaluations of the loss $\ell(y_{t'}, \theta_{t-T})$, or any other score function $v(y_{t'}, \theta_{t-T})$ on 
any future data $y_{t'}$ (relative to $\theta_{t-T}$), with $t' > t - T$, as {\em prediction} scores. 
Notice that given $y_{t'}$s have been drawn i.i.d.\ from the same data segment, the prediction scores are also i.i.d. random variables.

Suppose at time $t$ the data segment or \emph{task} is $k$ and we observe $y_t$, 
$t \geq \tau_{k-1}$ where $\tau_{k-1}$ is the most recently detected 
changepoint, i.e.\ 
when the $k$-th task started 
as shown in Eq. \eqref{eq:datasegment}.
If we decide that data $y_t$ comes from a new task $k+1$, we could set $\tau_k = t$, instantiate 
a new model with a fresh set of parameters $\theta^{(k+1)}$ and repeat the process. 
All these models could have completely separate parameters, i.e.\ 
$\theta^{(k)} \cap \theta^{(k')} = \emptyset, \forall k \neq k'$ or allow parameter sharing,
i.e.\ $\theta^{(k)} \cap \theta^{(k')} \neq \emptyset$, 
as further discussed in  
Section \ref{sec:continual} 
where we describe applications to continual learning.

\section{Changepoint Detection with Checkpoints}
\label{sec:checkpoints}

Throughout this Section 
we will be interested to detect 
the next changepoint $\tau_k$. Thus, to simplify notation we will drop index
$k$ and write this changepoint as $\tau$ and the current parameters as $\theta$ when it does not cause confusion. We will also assume that the previously detected changepoint is at time zero. 

The iterative procedure for changepoint detection with checkpoints is illustrated in Fig.\ \ref{fig:illustration}.
This algorithm assumes that 
together with the current parameter values $\theta_t$  
we cache in memory one or multiple copies of early values of the parameters referred to as 
model checkpoints or simply checkpoints. Checkpoints 
are cached and deleted periodically and statistical testing
for changepoint detection
is also performed periodically. When at iteration $t$, 
where model parameters are $\theta_t$, 
we need to perform a changepoint detection test
the algorithm 
activates the checkpoint $\theta_{t-T}$, that
has been cached 
$T$ iterations before. This checkpoint 
forms predictions on all subsequent data (not seen by the checkpoint) 
in order to detect a change in the data distribution
that possibly occurs in the data segment in 
$(t-T, t]$. 
Pseudo-code of the algorithm is provided in Algorithm \ref{alg:checkpoint_detect}. 

In the remaining of this Section we will be discussing in detail how Algorithm \ref{alg:checkpoint_detect} works. Some useful 
summarizing remarks to keep in mind are the following. The algorithm caches checkpoints every $D = T - 2 \alpha$ iterations with the first checkpoint cached at $t=0$. $T$ is the window size, $D$ is the stride and $\alpha>0$ is the minimum sample size when computing the testing statistics. The first testing occurs at time $t=T$, i.e.\ when the data buffer $\mathcal{B}_t$ becomes full 
and the first cached checkpoint used is the initial parameter values $\theta_0$. This constrains also the minimum size of the data segment (i.e.\ the distance between two consecutive changepoints) to be $T$. After the first test, testing occurs every $D$ iterations and given that each checkpoint is deleted after a test the number of checkpoints in memory is roughly $T/D$.

\subsection{Offline Changepoint Detection in a Window}

Suppose a sliding window of observations $y_{t'}, t - T < t' \leq t$ (recall that $y_{t'}$ can generally be a set or mini-batch of $b$ i.i.d.\ 
individual observations) of size $T$, i.e.\ all data observed strictly after the model checkpoint $\theta_{t-T}$.  
Given a scalar prediction score function: 
$$
v(y_{t'},\theta_{t-T}) = \frac{1}{b}\sum_{i=1}^b v(y_{t'}^i,\theta_{t-T}),
$$
or $v_{t'}$ for short, we consider the offline changepoint detection problem in the interval $(t-T, t]$ with the independent and normal distribution assumption:
\begin{assumption}
\label{ass:normality}
$v_{t'} \sim \mathcal{N}(\mu_{t'}, \sigma_{t'}^2) \textrm{ independently for all $t'$}.$
\end{assumption}

We consider the following hypothesis testing problem with unknown change time, mean and variance:
\begin{itemize}
    \item $\mathcal{H}_0: \exists \mu, \sigma^2 \textrm{ s.t. } \mu_{t'} = \mu, \sigma_{t'}^2 = \sigma^2, \forall t' \in (t-T, t]$.
    \item $\mathcal{H}_1: \exists \tau \in (t-T+\alpha, t-\alpha], \mu_1, \sigma_1^2, \mu_2, \sigma_2^2$ s.t. \\
    $\mu_{t'} = \mu_1, \sigma_{t'}^2 = \sigma_1^2,~~ \forall t' \in (t-T, \tau)$, \\
    $\mu_{t'} = \mu_2, \sigma_{t'}^2 = \sigma_2^2,~~ \forall t' \in [\tau,  t]$,
\end{itemize}
where $\alpha \in (0, T/2)$ is the minimum sample size in each segment of $\mathcal{H}_1$ for estimating the mean and variance. Using a model checkpoint and applying the testing on predictions is important to satisfy the independent assumption on scores.

Following the generalized likelihood ratio (GLR) test, we denote by $\Lambda_\tau$ the likelihood ratio of the two hypotheses at a changepoint of $\tau$ with the unknown variables taking the maximum likelihood estimates,
\begin{equation}
\Lambda_\tau = \frac{\underset{\mu, \sigma^2}{\sup}  p(\bfv_{(t-T, t]} | \mu, \sigma^2) }{\underset{\mu_1, \sigma_1^2}{\sup} p(\bfv_{(t-T, \tau)} | \mu_1, \sigma_1^2) \underset{\mu_2, \sigma_2^2}{\sup} p(\bfv_{[\tau, t]} | \mu_2, \sigma_2^2) },
\label{eq:lambda_tau}
\end{equation}
and compute the statistics as follows:
\begin{align}
Z =& \underset{\tau \in (t-T+\alpha,t-\alpha]}{\max} (-2 \log \Lambda_\tau) \nonumber\\
=& \underset{\tau \in (t-T+\alpha,t-\alpha]}{\max} \big\{ T \log S(\bfv_{(t-T, t]}) \nonumber\\
& - (\tau - t + T - 1) \log S(\bfv_{(t-T, \tau)}) \nonumber\\
& - (t - \tau + 1) \log S(\bfv_{[\tau,t]}) \big\},
\label{eq:Z}
\end{align}
where $S$ is the sample variance, $\bfv_{(t-T, \tau)}$ denotes the set of all values $v_{t'}, t-T < t'  < \tau$, $\bfv_{[\tau,t]}$ 
the values $v_{t'}, \tau \leq t' \leq t$ 
and $\bfv_{(t-T, t]}$ their union. 

Asymptotic distribution of the statistics $Z$ as $T\rightarrow \infty$ has been well studied in the literature for the normal distribution of $v_{t'}$ \citep{csorgo1997limit,jandhyala2002detection}. For a finite window size $T$, we can also compute the critical region, $Z > h(\delta)$ at a given confidence level $1-\delta$ numerically; see the Appendix. When the null hypothesis is rejected, we claim there is a changepoint in the current window $(t-T, t]$, and the changepoint is selected with $\tau = \arg\max_{\tau'} (-2 \log \Lambda_{\tau'})$.

It is important to note that the alternative hypothesis $\mathcal{H}_1$ is not a complement of the null, and we consider the candidate changepoint $\tau$ in a subset $(t-T + \alpha, t-\alpha] \subset (t-T, t]$ for reliable estimate of sample mean and variance. This means that when a true changepoint exists in the right border $[t-\alpha+1,t]$, it might cause a rejection and show up in the nearest location on the subset, i.e.\ $t-\alpha$, which subsequently could increase the error of the changepoint location estimation. To reduce this effect we can compute $\Lambda_{\tau}$ in the extended subset $(t-T + \alpha, t-\alpha+1]$, and do not reject $\mathcal{H}_0$ if $Z < -2\log \Lambda_{t-\alpha+1}$ ($Z$ is still taken in $(t-T+\alpha, t-\alpha]$). Notice that there are $\alpha$ samples in the right side when $t' = t-\alpha+1$, satisfying our requirement for the minimum sample size.

We repeat the offline detection using a sliding window of size $T$ with a stride $D=T-2\alpha$. This ensures that every time location will be included in the candidate subset for exactly one test. An illustration of this is shown in
Fig.\ \ref{fig:illustration} where the green border is precisely the location $t-\alpha+1$ in the extended subset, 
which is ignored if the maximum occurs there, but it could be accepted 
in the next iteration where the green location becomes 
the first location in the new subset. Similarly, the possibility that a changepoint occurs in the left border $(t-T, t-T+\alpha]$ can be detected in a previous window.

\begin{algorithm}[tb]
\caption{Changepoint detection with checkpoints}
\label{alg:checkpoint_detect}
\begin{small}
\begin{algorithmic}
    \STATE {\bfseries Procedure:} $\mathrm{changepoint\_detection}(\theta_0, \alpha, T, \delta, \eta, \mathrm{update\_step})$
    \STATE {\bfseries Input:} Initial parameters $\theta_0$, minimal sample size $\alpha$, window size $T$, error schedule parameters $\{\delta, \eta\}$, optimization step function $\mathrm{update\_step}$  
    \STATE {\bfseries Output:} Changepoint location $\tau^*$, model parameter $\theta$
    \STATE Initialize: test region size $D = T - 2\alpha$, test index $i = 0$, time step $t=0$, data buffer $\mathcal{B}_0 = \emptyset$.
    \FOR{$t=1,2,\ldots$}
        \STATE Receive mini-batch $y_t$, update data buffer $\mathcal{B}_t = \{y_{t'}: \max(0, t-T) < t' \leq t\}$
        \STATE $\theta_t = \mathrm{update\_step}(\theta_{t-1}, y_t)$  
        \STATE {\bfseries if} $t ~\mathrm{mod}~ D = 0$ {\bfseries then} cache checkpoint $\theta_t$
        \IF{$t = i D + T$}
            \STATE Compute scores with cached checkpoint $\theta_{t-T}$: 
            \STATE ~~~~$v_{t'} \equiv v(y_{t'}, \theta_{t-T})$, $t-T< t' \leq t$ 
            \STATE $\delta_i = (1-\eta) \eta^i \delta$
            \STATE $(\mathrm{reject},\tau^*) = \mathrm{offline\_detection}(\bfv_{(t-T,t]}, \alpha, \delta_i)$
            \IF{$\mathrm{reject}$}
                \STATE {\bfseries return} $(\tau^*, \theta_t)$
            \ENDIF
            \STATE $i \leftarrow i + 1$, delete checkpoint $\theta_{t-T}$
        \ENDIF
   \ENDFOR
\end{algorithmic}
\end{small}
\end{algorithm}

\begin{algorithm}[tb]
\caption{Offline changepoint detection}
\label{alg:checkpoint_detect_subroutine}
\begin{small}
\begin{algorithmic}
    \STATE {\bfseries Subroutine:} $\mathrm{offline\_detection}(\bfv, \alpha, \delta)$
    \STATE {\bfseries Input:} Scores $\bfv_{(t-T,t]}$, minimal sample size $\alpha$, error $\delta$
    \STATE {\bfseries Output:} Boolean $\mathrm{reject}$, candidate location $\tau^*$
    \STATE Compute the threshold $h=\mathrm{quantile}(1-\delta)$
    \FOR{$t'=t-T+\alpha+1, \dots, t-\alpha+1$}
        \STATE Compute $\Lambda_{t'}$ using Eq.\ \eqref{eq:lambda_tau}
    \ENDFOR
    \STATE Compute $Z$ using \eqref{eq:Z}, $\tau^*=\underset{t-T+\alpha <\tau \leq t-\alpha}{\mathrm{argmax}}(-2 \log \Lambda_\tau)$
    \STATE $\mathrm{reject} = Z>h \text{ and } Z > -2\log \Lambda_{t-\alpha+1}$
    \STATE {\bfseries return} $\mathrm{reject}, \tau^*$
\end{algorithmic}
\end{small}
\end{algorithm}

\subsection{Online Changepoint Detection across Windows
\label{sec:offlinedetection}}

As the interval between two changepoints spans over multiple test windows, we would like to control the overall error $\delta$ of making a false rejection of the null hypothesis, that is making a false claiming that data distribution changes, in every data segment. Since the model checkpoints change at every test window, it is difficult to apply a standard sequential likelihood ratio test across windows. Instead, we select the confidence level with an annealing schedule so that the overall error is bounded:
\begin{equation}
    \delta_i = (1-\eta) \eta^i \delta, \label{eq:delta_schedule}
\end{equation}
where $\delta_i$ is the confidence level for the $i$-th (0-based) test window after a new task is detected and $\eta$ is the decaying rate.

\begin{proposition}
\label{prop:error}
Given Assumption~\ref{ass:normality} holds, the probability of making a Type I error (false changepoint detection) by Algorithm  \ref{alg:checkpoint_detect} between two real changepoints is upper bounded by $\delta$.
\end{proposition}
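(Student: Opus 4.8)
The plan is to reduce a false detection within a single data segment to a union of per-test false-rejection events, and then control that union by the union bound combined with summing the geometric annealing schedule \eqref{eq:delta_schedule}.

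First I would fix the interval between two consecutive real changepoints, where by \eqref{eq:datasegment} every observation is i.i.d.\ from one segment distribution $\mathcal{P}_k$. For a checkpoint $\theta_{t-T}$ used within this interval, the scores $v_{t'} = v(y_{t'},\theta_{t-T})$ are evaluated on data drawn strictly after the checkpoint; within the segment this data is i.i.d.\ and independent of the (data-dependent) checkpoint, so conditional on $\theta_{t-T}$ the scores are i.i.d., and by Assumption \ref{ass:normality} they are normal with a common mean and variance. This is exactly the null $\mathcal{H}_0$, so every test window contained in the segment operates under $\mathcal{H}_0$.

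Next I would bound the per-test error. By construction $h(\delta_i)$ is the $(1-\delta_i)$-quantile of the null distribution of the statistic $Z$, giving $\Pr(Z > h(\delta_i)\mid \mathcal{H}_0) \le \delta_i$ for each fixed checkpoint, hence also unconditionally by the tower property. Because the subroutine rejects only when additionally $Z > -2\log\Lambda_{t-\alpha+1}$, the true rejection probability of the $i$-th test under $\mathcal{H}_0$ is no larger than $\delta_i$. Letting $A_i$ denote the event that the $i$-th test (0-based, counted from the start of the segment) falsely rejects, a Type I error is $\bigcup_i A_i$, and sub-additivity together with \eqref{eq:delta_schedule} gives
\[
\Pr\Big(\bigcup_i A_i\Big)\;\le\;\sum_{i=0}^{\infty}\Pr(A_i)\;\le\;\sum_{i=0}^{\infty}\delta_i\;=\;(1-\eta)\,\delta\sum_{i=0}^{\infty}\eta^i\;=\;\delta ,
\]
where any finite segment contains finitely many tests, so its partial sum is dominated by this infinite geometric series.

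The main obstacle I anticipate is not the arithmetic but arguing that the union bound is admissible: the statistics $Z$ of different windows are dependent, both because consecutive windows overlap (the stride $D = T-2\alpha < T$) and because each uses a distinct, data-dependent checkpoint. The resolution is that sub-additivity of probability needs no independence, so this dependence can simply be ignored for the Type I bound; this is precisely why the schedule in \eqref{eq:delta_schedule} is used in place of a classical sequential likelihood-ratio test, whose martingale justification would be broken by the changing checkpoints. The remaining care is the conditioning argument of the second paragraph, ensuring $\mathcal{H}_0$ genuinely holds for each window even though the checkpoint is random.
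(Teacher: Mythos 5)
Your proposal is correct and follows essentially the same route as the paper's proof: decompose the Type I error into per-window false rejections, bound each by $\delta_i$ via the quantile definition of $h(\delta_i)$ (noting the extra condition $Z>-2\log\Lambda_{t-\alpha+1}$ can only reduce the rejection probability), and apply the union bound with the geometric sum of the schedule in Eq.\ \eqref{eq:delta_schedule}. Your added remarks on conditioning on the data-dependent checkpoint and on the union bound requiring no independence are sound elaborations of steps the paper leaves implicit, not a different argument.
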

\begin{proof}
Let $y_1, y_2, \dots, y_N$ be the segment of data stream with the same distribution $\mathcal{P}$ where $N$ is the time of last mini-batch of data, $y_N = \{y_N^i\}_{i=1}^b$. Offline change point detection is conducted in windows $(0, T]$, $(D, T+D]$, $(2D, T+2D]$, $\dots$, $(nD, T+nD]$ before a changepoint occurs, where $n$ is the maximum integer with $T+nD \leq N$. Under Assumption \ref{ass:normality}, the probability of rejecting the null hypothesis at the $i$-th testing window with input error argument $\delta_i$ is
\begin{equation}
    Pr(\mathrm{reject}_i)  \leq Pr(Z > h(\delta_i)) = \delta_i,
\end{equation}
where the first inequality is due to the possibility to ignore the rejection when $Z \leq -2\log \Lambda_{t - \alpha + 1}$, and the second equality follows the definition of $h$ in Algorithm \ref{alg:checkpoint_detect_subroutine} as the $1-\delta$ quantile.

The probability that the null hypothesis is rejected in at least one testing window is then upper bounded with the union bound by
\begin{align}
    Pr(\cup_{i=0}^n \{\mathrm{reject}_i\}) 
    & \leq \sum_{i=0}^n Pr(\mathrm{reject}_i) = \sum_{i=0}^n \delta_i \nonumber\\
    & = \sum_{i=0}^n (1-\eta) \eta^i \delta < \delta.
\end{align}
\end{proof}
The time complexity of running Algorithm \ref{alg:checkpoint_detect} on a datastream of length $t$ is $O(tbT/D)$ where $b$ is the mini-batch size and $D=T-2\alpha$ is the stride of the sliding window, and the space complexity for storing the checkpoint and data buffer is $O(T/D S_m + Tb S_y)$ where $S_m$ and $S_y$ denotes the size of a model checkpoint and a data point.

 
\subsection{Setting the Hyperparameters and Prediction Scores}
\label{sec:hyper-parametres}

There are a few hyperparameters in our proposed algorithm, including the window size $T$, minimum sample size in a window $\alpha$, Type I error $\delta$, and the error decaying factor $\eta$. 

A large window size provides more data for every offline detection and usually leads to a higher accuracy. However, the space complexity increases linearly as the window size in order to keep a data buffer of size $T$, and it has to be upper bounded by our prior assumption on the minimum distance between two consecutive changepoints. Also, when the score function $v(y_{t'},\theta_{t-T})$ requires a good model fit in 
order to be discriminative between tasks, 
a smaller window size can be beneficial at the beginning of a new task because it would update the model checkpoint more frequently and thereby improve the discriminative power in detecting a changepoint more quickly. We study the effect of $T$ empirically in our experiments.

A sufficiently large minimum sample size $\alpha$ is important to obtain reliable estimate of the sample variance and stabilize the distribution of the statistics $Z$. But too large value in $\alpha$ reduces the range of candidate locations and decreases the power in a single offline detection test. Also, because the sliding window has a stride of $D = T - 2\alpha$, the time complexity increases with $\alpha$. In our experiments, we use a default value $\alpha = \lfloor T/4 \rfloor$, giving $D \approx T/2$.
Notice that with such default settings only two checkpoints are needed to be kept in memory (since $T/D = 2$) resulting in 
small memory cost. 

Given a total Type I error $\delta$, the decaying factor $\eta$ controls the exponential distribution of the error across windows with mean $D / \log(1/\eta)$. In principle, we would like the mass of the error to be distributed in the support of our prior about the changepoint frequency. In lack of this knowledge, we use $\eta=0.99$ in all the experiments.

The prediction score function $v(y_{t'}, \theta_{t-T})$ must be discriminative with respect to data streams from different tasks. When the model parameter $\theta_t$ is well fitted to the current task, a properly chosen score function is usually sensitive to the change of the task. Nevertheless, we emphasize that being fitted to the current task is not a necessary condition for our changepoint detection method to work. As demonstrated in the example of Section \ref{sec:illustrative_example}, our algorithm in some cases can detect the changepoints robustly regardless of the learning rate
in the update rule in Eq.\ \eqref{eq:updaterule} that affects how well the model is fitted to the data.

A key assumption about our detection algorithm is the normal distribution of the score function defined on every mini-batch of data.
In experiments with continuous observations we use the average negative log-likelihood as score $v_{t'}= 1/b \sum_{i=1}^b -\log p(y_{t'}^i|\theta_{t-T}))$, and in experiments with discrete observations, we find it works better by applying another logarithm operation as 
$$
v_{t'}= 1/b \sum_{i=1}^b \log (- \log Pr(y_{t'}^i|\theta_{t-T}) + \epsilon),
$$
 where $\epsilon>0$ is a small jittering term for numerical stability. 
Fig.\ \ref{fig:score_histogram} shows typical histograms of scores in a testing window from our
continual learning experiments with real-world data; see Section \ref{sec:experiments}. We also apply the D'Agostino and Pearson's normality test \citep{d1971omnibus,d1973tests} on a sample of 100 scores in this setting and show the p-value in the caption of each plot. It is clear that the normality improves with a larger size $b$ of mini-batch due to the central limit theorem, and the distribution of scores in the log-domain is closer to a normal distribution. We show in the experiments that the performance of our detection improves significantly with the mini-batch size.

\begin{figure*}[tbh]
\centering
\subfigure[Batch size = 10, p-value = 0.002.]{
\includegraphics[width=.26\textwidth]{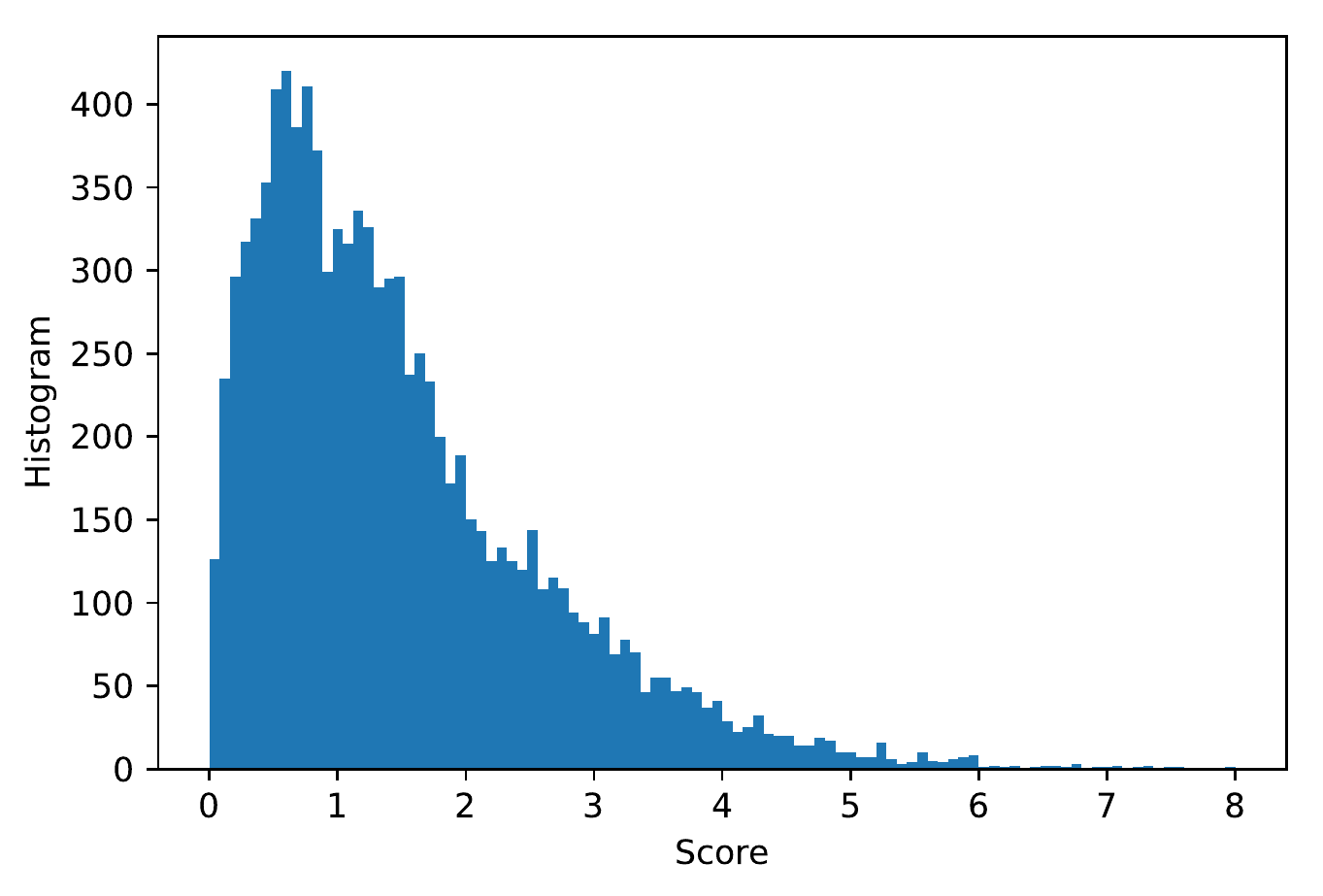}
\label{fig:hist_score_bs10}
}
~
\subfigure[Batch size = 50, p-value = 0.001.]{
\includegraphics[width=.26\textwidth]{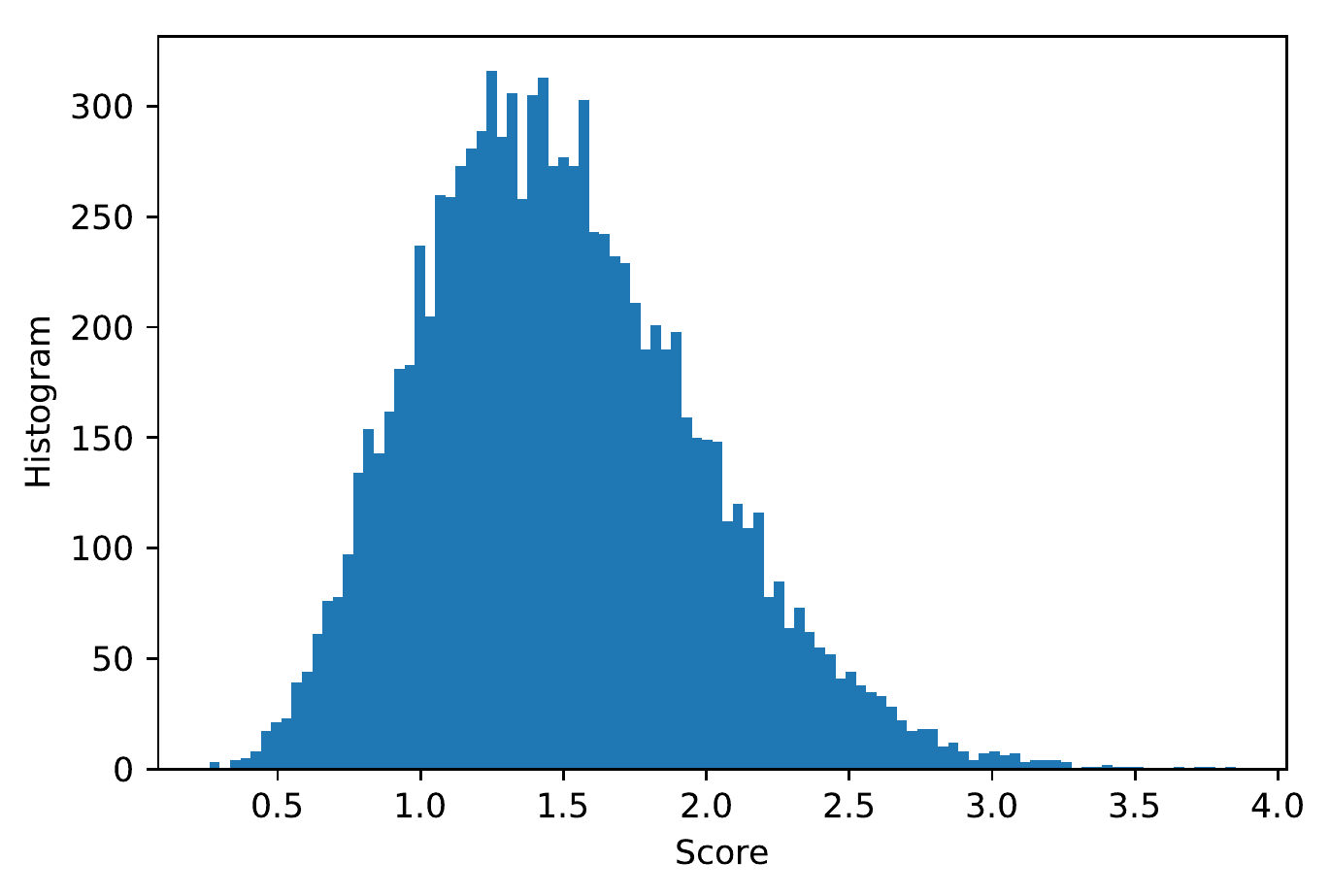}
\label{fig:hist_score_bs50}
}
~
\subfigure[Batch size = 100, p-value = 0.32.]{
\includegraphics[width=.26\textwidth]{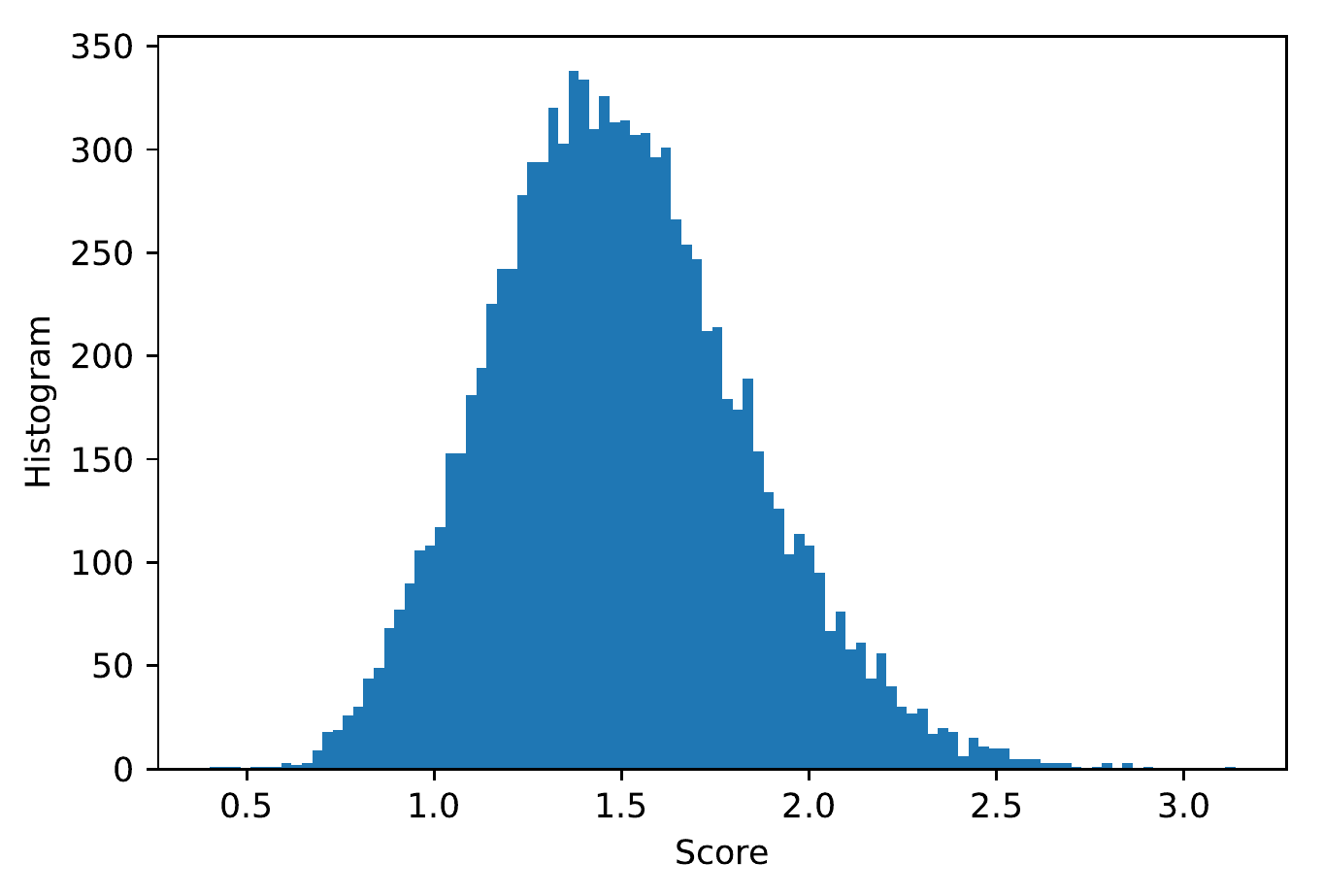}
\label{fig:hist_score_bs100}
}
\\
\subfigure[Batch size = 10, p-value = 0.42.]{
\includegraphics[width=.26\textwidth]{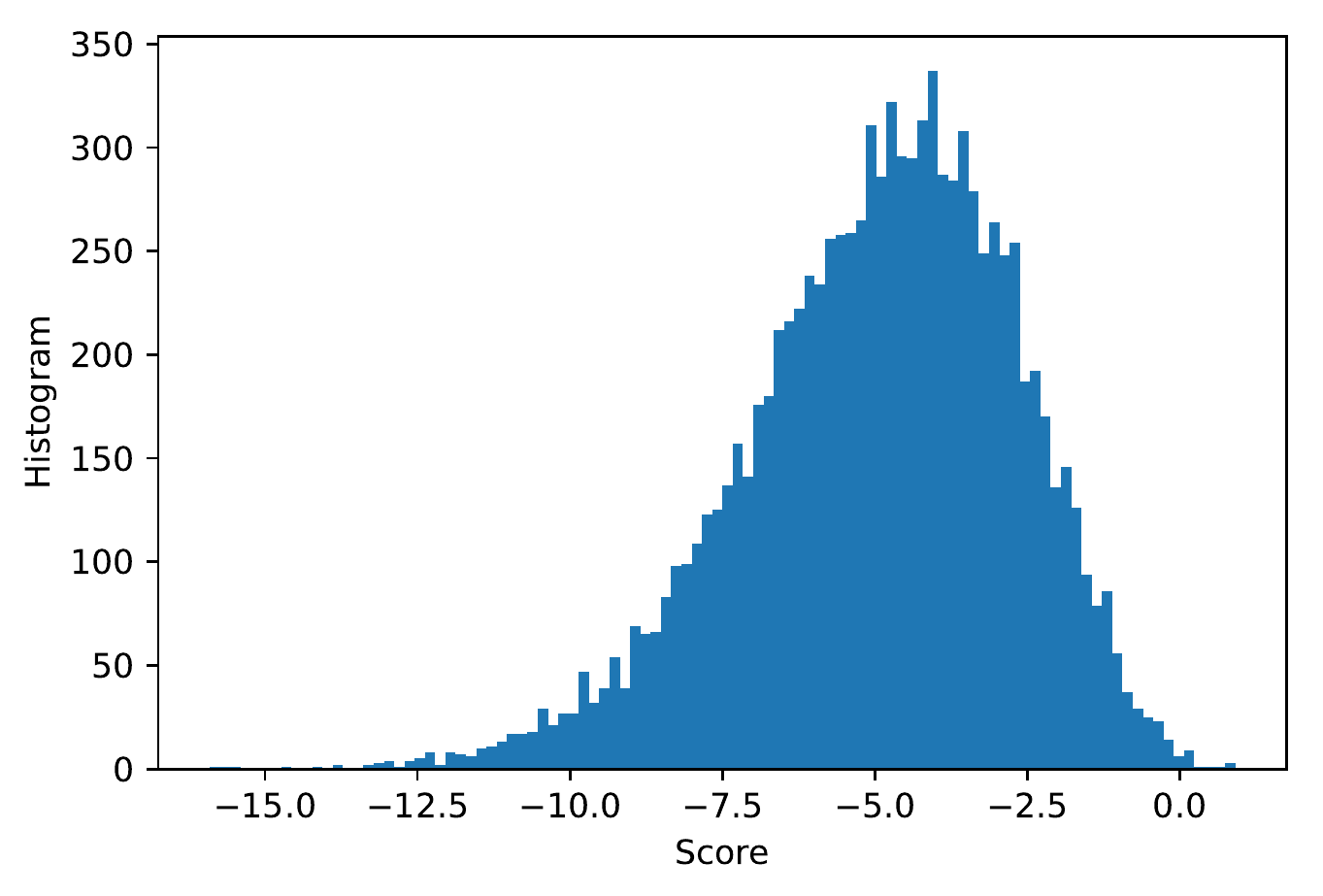}
\label{fig:hist_score_bs10_042}
}
~
\subfigure[Batch size = 50, p-value = 0.93.]{
\includegraphics[width=.26\textwidth]{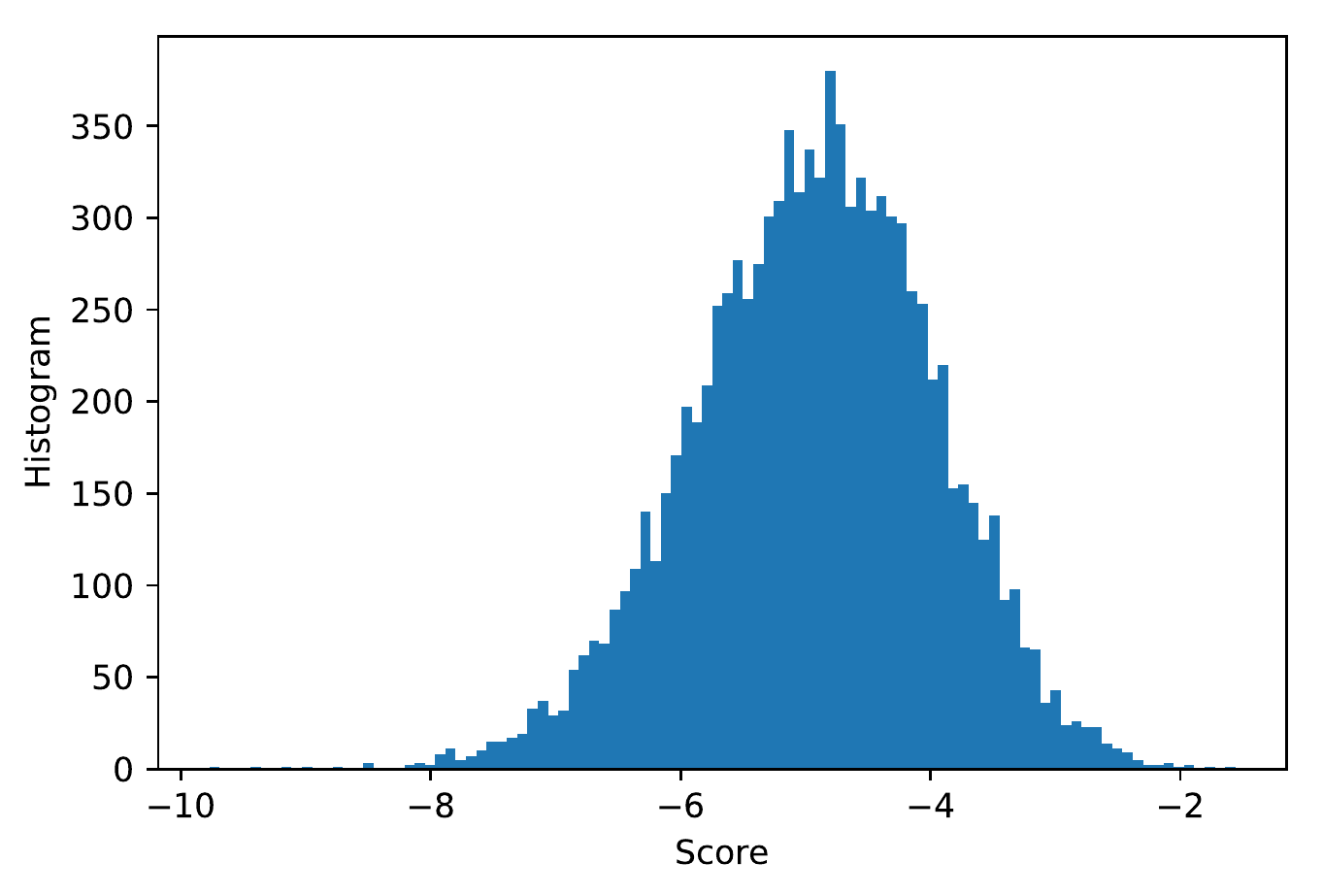}
\label{fig:hist_score_bs50_093}
}
~
\subfigure[Batch size = 100, p-value = 0.29.]{
\includegraphics[width=.26\textwidth]{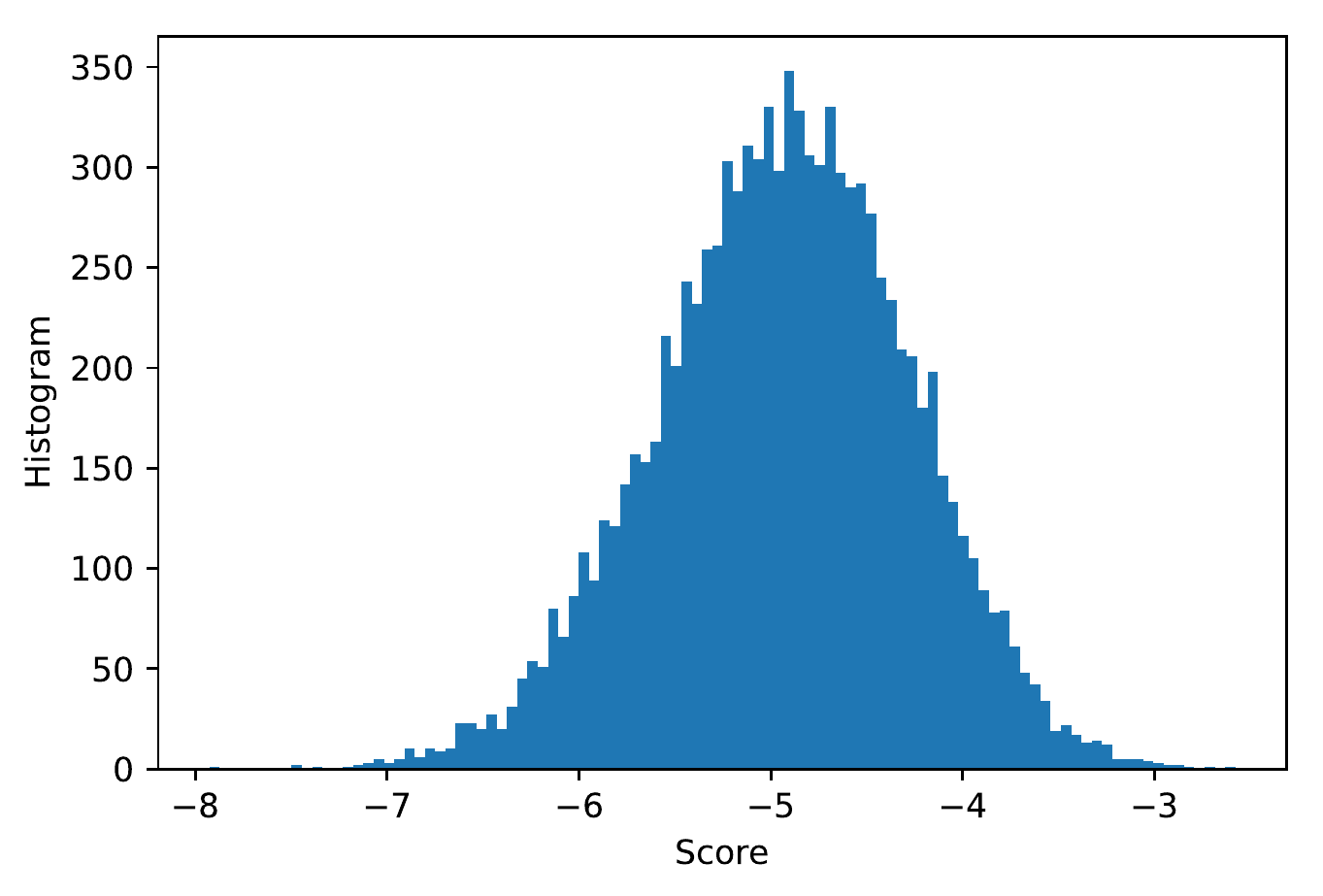}
\label{fig:hist_score_bs100_029}
}
\caption{Typical histogram of scores in the Split-MNIST experiment when no changepoint occurs. Top row use the score of the mean negative log-likelihood, and bottom row applies another logarithm transformation. The mini-batch size from left to right is 10, 50 and 100 respectively. p-value of the normality test based on 100 samples is shown in the subfigure caption.}
\label{fig:score_histogram}
\end{figure*}

\section{Application to Continual Learning \label{sec:continual}}

To test our method on a challenging 
online model fitting and changepoint detection problem
we consider continual learning (CL) 
 \citep{ring1994continual,robins1995catastrophic,schmidhuber2013powerplay,goodfellow2013empirical}, which 
 requires training neural networks on a sequence of tasks. Many recent CL methods, see e.g.\ \citep{kirkpatrick2017overcoming,nguyen2017variational,rusu2016progressive,li2017learning,farquhar2018towards},
typically assume known task changepoints, also called  task boundaries. 
Instead, here we wish to train a CL method without knowing the task boundaries 
and investigate whether we can accurately 
detect the changepoint locations that quantify when the 
data distribution is changing from one 
task to the next.     
%
%
The sequential learning and changepoint 
detection Algorithm \ref{alg:checkpoint_detect}
can be easily incorporated to existent 
CL algorithms, since the essential component 
of the algorithm is the prediction score function $v(\cdot)$
used in hypothesis testing.
In the following, we combine our algorithm with a standard experience replay  CL method \citep{robins1995catastrophic,robins1998catastrophic,lopez2017gradient,rebuffi2017icarl}, 
which regularizes stochastic gradient descent model training  for the current task by replaying a small subset of observations from previous tasks, as detailed next.  

As the main structure for the CL model 
we consider a feature vector $\phi(x;\theta^s)
\in \mathbbm{R}^M$,
obtained by a neural network with parameters $\theta^{s}$, where these parameters are shared across all tasks. 
For each detected $k$-th task there is
a set of task-specific or private parameters $\theta^{p,k}$, which are added dynamically into the model each time 
our algorithm returns a detected changepoint indicating the beginning of a new task. In the experiments,  
we consider CL problems where each task is a 
binary or multi-class classification problem, so that 
each $\theta^{p,k}$ is a different head, i.e.\ a set of final output parameters, 
attached to the main network consisted of the feature vector $\phi(x;\theta^s)$. 
For instance, if the task is multi-class classification, then $\theta^{p,k}$ is a matrix of size $C \times M$, where $C$ denotes the number of classes. In this case, such 
 task-specific parameter  allows the computation of the softmax or multinomial
logistic regression likelidood 
$$
p(c|x,k) = \frac{\exp\{ \sum_{m=1}^M [\theta^{p,k}]_{cm} \phi_m(x;\theta^s) \}}
{\sum_{c'=1}^C \exp\{ \sum_{m=1}^M [\theta^{p,k}]_{c'm} \phi_m(x;\theta^s) \}}, 
$$
that models the categorical probability distribution for classifying input data points from the $k$-th task.  

We assume that the CL model is continuously trained so that tasks occur sequentially and they are separated by random changepoints. For simplicity, we also assume that previously seen tasks never re-occur. 
At time $t=0$, the shared parameters $\theta^s$ are initialized to some random value, and the parameters $\theta^{p,1}$ of the first task are also initialized arbitrarily, e.g.\ to zero or randomly. Then, learning of the model parameters progresses so that whenever a changepoint $\tau_k$ occurs a fresh set of 
task-specific parameters $\theta^{p,k}$ is instantiated while all existing parameters, such the shared parameters $\theta^s$, 
maintain their current values, i.e.\ they are not re-initialized. However, this continuous updating can cause the shared feature vector $\phi(x;\theta^s)$ to yield poor predictions on early tasks, a phenomenon known in the literature of neural networks as catastrophic forgetting  \citep{robins1995catastrophic,goodfellow2013empirical,kirkpatrick2017overcoming}, and the prevention of this is one major challenge CL methods need to deal with.

More specifically, at each time instance the model receives a mini-batch of training 
examples $y_t = \{c_t^i, x_t^i\}_{i=1}^b$, where $c_{t}^i$ 
is a class label and $x_t^i$ is an input vector. At each 
time step the current detected task is $k$, so 
that in the shared feature vector $\phi(x;\theta^s)$ 
we have attached so far $k$ heads each with 
task-specific parameters $\theta^{p,i}$, $i=1,\ldots,k$. The full set of currently instantiated parameters is denoted by $\theta^{(k)} = (\theta^s, \{\theta^{p,i}\}_{i=1}^k)$ to emphasize the dependence on the $k$-th task. Training with the current 
$k$-th task is performed by using the standard negative log-likelihood, 
i.e.\ cross-entropy loss, regularized by adding a sum of \emph{replay
buffers}, which correspond to negative log-likelihoods terms evaluated at small data subsets from all previous tasks, 
 \begin{equation}
  L(\theta^{(k)}) = 
 L_{k}(\{c_t^i, x_t^i\}_{i=1}^b; \theta^{p,k},\theta^s) 
 +  \lambda \sum_{i=1}^{k-1}  L_i (\mathcal{R}_i; \theta^{p,i}, \theta^s), 
 \label{eq:CLloss}
 \end{equation} 
 where $\lambda>0$ is a regularization parameter and 
 each $L_{i}(\cdot)$ is a sum of negative log-likelihood terms over the 
 individual data points. Each $\mathcal{R}_i$ 
 is a small random subset of data from the $i$-th task that is stored as soon as this task  is detected and then used as an  
 experience replay (to avoid forgetting the $i$-th task) 
 when training in future tasks. 
 
 Pseudo-code of the whole procedure for training the CL model with simultaneous changepoint detection based on our checkpoint framework is outlined in Algorithm \ref{alg:continual_learning}. For simplicity in Algorithm   \ref{alg:continual_learning} we assumed that 
 the replay buffers $\mathcal{R}=\{\mathcal{R}_i\}_{i=1}^{k-1}$
 are global variables that affect the subroutine  $\mathrm{cl\_update\_step}$ without having to be passed as inputs. A second simplification is that each task replay buffer $\mathcal{R}_k$ in practice is actually created inside Algorithm \ref{alg:checkpoint_detect}, where a few data mini-batches of the current task are stored into the fixed-size memory to form $\mathcal{R}_k$.\footnote{This second simplification was made to keep the structure of Algorithm \ref{alg:checkpoint_detect} in its general form, while the minor modification regarding the replay buffers is only needed for this specific CL application.}

\begin{algorithm}[tb]
\caption{Continual learning}
\label{alg:continual_learning}
\begin{small}
\begin{algorithmic}
    \STATE {\bfseries Procedure:} $\mathrm{continual\_learning}$
    \STATE {\bfseries Input:} Initial shared model parameter $\theta^s$, parameters for $\mathrm{changepoint\_detection}$: $\alpha$, $T$, $\{\delta, \eta\}$
    \STATE {\bfseries Output:} Model parameters $\Theta$, list of changepoints $\mathrm{T}$
    \STATE Initialize: list of  parameters $\Theta = \theta^s$, list of replay buffers $\mathcal{R}=[]$, and list of changepoints $\mathrm{T} = []$
    \FOR{$k=1,2,\ldots$}
        \STATE Initialize task private parameter $\theta^{p,k}$
        \STATE Concatenate all current parameters: 
        $\theta^{(k)} = (\theta^s, \{\theta^{p,i}\}_{i=1}^k)$ 
        \STATE $\tau^*, \! \theta^{(k)} \! = \!
        \mathrm{changepoint\_detection}(\theta^{(k)}, \alpha, T, \delta, \eta, \! \mathrm{cl\_update\_step})$
        
        \STATE Append $\tau^*$ to list $\mathrm{T}$ 
        
         \STATE Construct a task replay buffer $\mathcal{R}_k$ and append it to the list $\mathcal{R}$ 
         
         \STATE $\Theta = \theta^{(k)}$ 
    \ENDFOR
      \STATE {\bfseries return} $(\Theta, \mathrm{T},\mathcal{R})$
        
    
    \STATE

    \STATE {\bfseries Subroutine:} $\mathrm{cl\_update\_step}(\theta^{(k)}, y_t)$
    \STATE {\bfseries Input:} Full set of model parameters $\theta^{(k)}$, data mini-batch $y_t$, 
    \STATE {\bfseries Output:} Updated model parameters $\theta^{(k)}$
    \STATE $\theta^{(k)} \leftarrow \theta^{(k)} - \rho_t \nabla L(\theta^{(k)})$,  \ \text{where} \ 
    $L$ is from Eq.\ \eqref{eq:CLloss}
    \STATE {\bfseries return} $\theta^{(k)}$
\end{algorithmic}
\end{small}
\end{algorithm}

Finally, an interesting aspect of using checkpoints for changepoint detection in CL is that 
once the next changepoint $\tau_k$ is detected, and thus we need to instantiate 
a new task parameters $\theta^{p,k+1}$, we can reuse one of the checkpoints to
avoid the full set of model parameters 
$\theta_t \equiv (\theta^s, \{\theta^{p,i}\}_{i=1}^k)$ being contaminated by training updates using data from a new task in iterations $t' \in [\tau_k, t]$, without knowing yet the task change.
Specifically, we can  
re-set this full parameter vector to the nearest checkpoint that exists on  
the left of the changepoint location
$\tau_k$.
This allows the checkpoint
to act as a \emph{recovery state} that can mitigate forgetting of the current $k$-th
task parameters caused by these extra updates, i.e.\ for
$t' \in [\tau_k, t]$.


\section{Related Work
\label{sec:related}}


Changepoint detection methods 
are categorized into offline and online settings
\citep{Aminikhanghahi:2017,truong2018selective}.
Offline algorithms  
such as the recent linear time dynamic programming algorithms  \citep{pruningfearnhead,Maidstone2017} operate similarly to the Viterbi algorithm in hidden Markov models \citep{Bishop:2006}, where they  
need to observe the full data sequence in order to retrospectively identify multiple changepoints. 
In contrast, in  online changepoint detection 
the aim is to detect a change as soon as it occurs 
while data arrive online. Online detection 
has a long history in statistical process control 
\citep{page57,Hawkins2003} where typically 
we want to detect a change in a mean 
parameter in time series.  
More recently, Bayesian online changepoint detection methods
have been developed in \citep{Fearnhead2006,Fearnhead2007,Adams07bayesianonline}, that consider 
conjugate exponential family models and online
Bayesian updates. These latter techniques 
can be extended to also allow online point estimation
of some model parameters \citep{Caron2012,Sinam13}, but 
they remain computationally too 
expensive to use in deep learning 
where models consist of neural networks. This is because they 
are based on Bayesian inference procedures that require selecting suitable 
priors over model parameters and they rely on applying
accurate online Bayesian inference 
which is generally intractable, unless the 
model has a simple conjugate form. Also approximate inference can be too costly 
and inaccurate for highly non-linear and parametrized models.  

The method we introduced differs from these previous approaches, since it relies on the idea of a checkpoint 
which allows to detect changepoints by performing 
multi-step ahead predictions. This setup provides
a stream of 1-dimensional numbers with a simple 
distribution on which we can apply standard statistical tools
to detect whether there exists an abrupt change 
in a window of these predictions.
The checkpoint is updated over time by tracking slowly
(within a distance $T$) the actual model, which can 
improve the discriminative power overtime 
as the task persists and the checkpoint becomes more specialized
to the task distribution. The method can be considered 
as a combination of offline and online detection \citep{Aminikhanghahi:2017} since, while model parameter learning is online, each testing 
with a checkpoint involves an offline subroutine;
see Algorithm \ref{alg:checkpoint_detect_subroutine}. 


More distantly related work is from the recent continual learning literature such as
the so-called task-free or task-agnostic methods
 \citep{Aljundi18a,Aljundi18b,kaplanis2018continual,Zeno2018TaskAC,DushyantRao2019} that 
 learn without knowing or assuming task boundaries. 
 However, the objective there is typically not to explicitly detect changepoints, but instead to maintain an overall good predictive
 performance, by avoiding 
 catastrophic forgetting of the neural network model. 
 In contrast, our method aims to explicitly
 detect abrupt changes in arbitrary online learning 
 systems, either traditional few-parameter models  
 or neural networks used in continual learning. 
 As we discussed in Section \ref{sec:continual} and will 
 demonstrate next  in Section 
 \ref{sec:experiments} our algorithm can be 
 combined with existent continual learning 
 methods and enhance them with the ability of 
 changepoint detection.

\section{Experiments 
\label{sec:experiments}}

\subsection{Time Series Example} 
\label{sec:illustrative_example}

Fig.\ \ref{fig:toyregression1} shows online changepoint detection on an artificial time series dataset, a small snapshot of which was used  
in the illustrative example in Fig.\  \ref{fig:illustration}(b). 
The task is to track a data stream of 1-dimensional noisy observation (each $y_t$ is a scalar value) with abrupt changes in the mean.
The model has a single parameter $\theta$: a moving average that is updated as data arrive online, where the underlying loss 
is $0.5 (y_t -\theta)^2$ (which up to a constant is
the negative log-likelihood of a normal distribution with a fixed variance).
Fig.\ \ref{fig:toyregression1} shows 
changepoint detection achieved by the proposed algorithm. The panel on the top row 
shows the data, the moving average parameter
and the testing windows together with the corresponding checkpoints that lead to
all seven detections.
The panel on the bottom row 
shows the GLR statistics, $- 2 \log \Lambda_{\tau}$, 
computed through time which clearly obtains maximal values at the changepoint locations. The window size was
$T=50$, $\alpha = \lfloor T/4 \rfloor=12$
and $\delta=10^{-3}$.  
All the changepoints are detected by our algorithm without a false alarm. Every changepoint corresponds to
a clear spike in the GLR statistics significantly higher than the normal range of values.

We also study the impact of a sub-optimal choice of learning rate of the tracking model 
to our changepoint detection algorithm in Fig.\ \ref{fig:toyregression1_effectlr}.
As discussed in Section \ref{sec:hyper-parametres}, as long as the score function, 
negative log-likelihood in this example, 
is able to differentiate between different tasks 
the changepoints can still be robustly detected regardless of whether the model is under-fit or over-fit to the data. We should point out, however, that in more complex models 
might not be feasible 
to come up with such discriminate scores. For example, a neural network with random weights most likely will not be so discriminative, i.e.\ it shall provide similar 
(random) predictions for data coming from different tasks. Nevertheless, with reasonable training the neural network can become more specialized to a certain task and provide predictions that can significantly differ from those associated with data from other tasks, that the network hasn't trained on. Therefore, in more complex models the learning rate needs to be chosen carefully to allow quick adaptation to the task data so that the score function, computed under checkpoints, can become more discriminative of task changes.



\begin{figure*}[!htb]
\centering
\begin{tabular}{c}
{\includegraphics[scale=0.32]  
{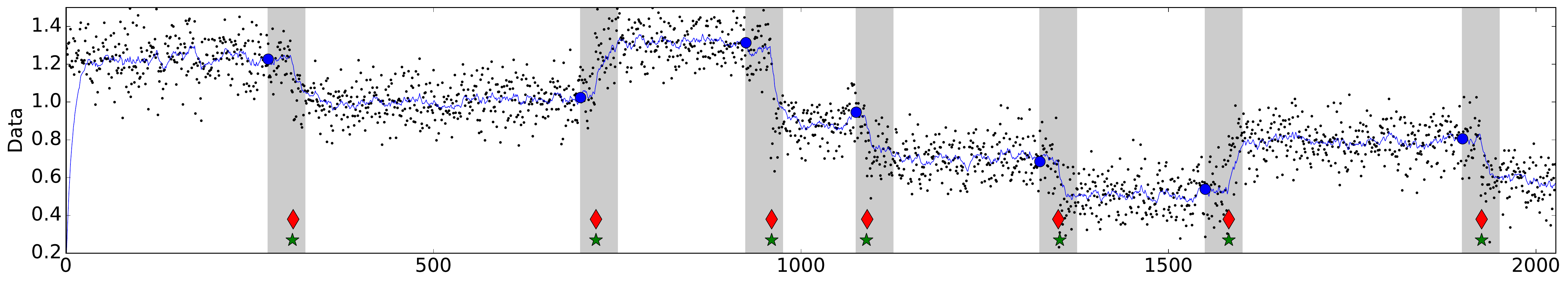}} \\
{\includegraphics[scale=0.32]  
{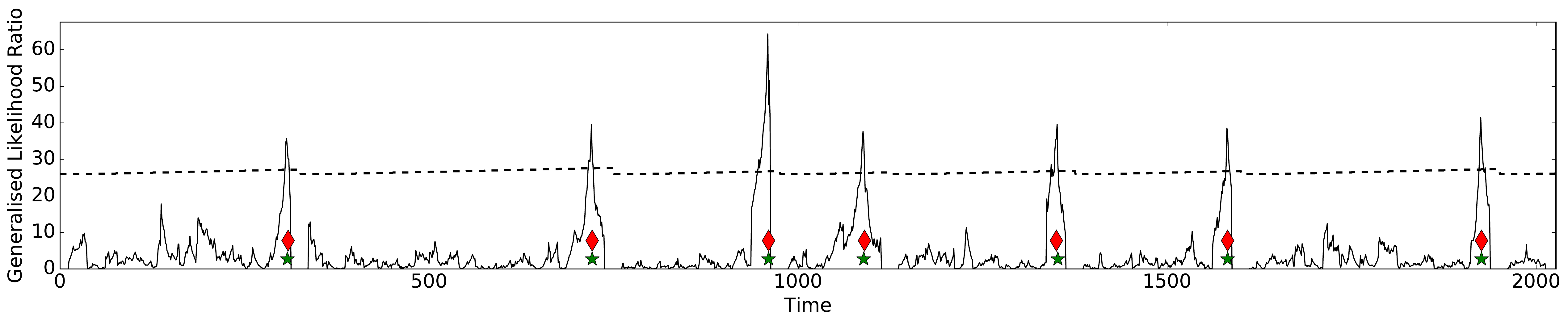}}
\end{tabular}
\caption{Changepoint detection
in an 1-D 
time series. The moving average parameter $\theta$ is initialized at $0$ and 
updated in each step with the gradient update $\theta \leftarrow \theta + 0.1 (y_t - \theta)$, where $\rho=0.1$ is the learning rate. 
({\bf Top}) Data (black dots), 
$\theta$ (blue line) and the detected changepoints, shown as green stars, while the red diamonds are the ground-truth values. 
For each detection all data used in the 
corresponding testing window are highlighted by the shaded areas. The blue dots on the left-borders of these areas are checkpoints.  
({\bf Bottom}) The GLR test values, $- 2 \log \Lambda_\tau $  (solid back line) and the detection threshold, i.e.\ 
$h(\delta_i) = \text{quantile}(1-\delta_i)$ (dotted line), where the latter increases with any new test and resets to its initial value after a detection.} 
\label{fig:toyregression1}
\end{figure*}

 \begin{figure*}[!htb]
\centering
\begin{tabular}{c}
{\includegraphics[scale=0.32]  
{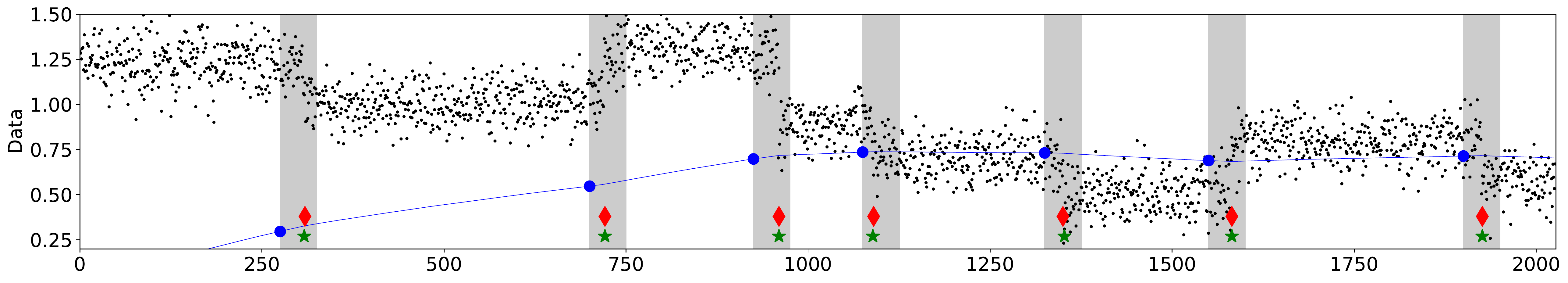}} \\
{\includegraphics[scale=0.32]  
{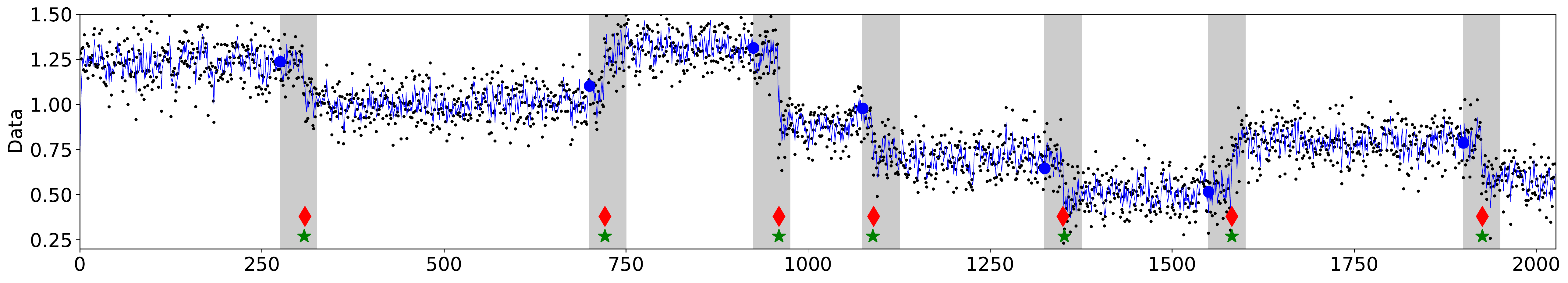}}
\end{tabular}
\caption{Effect of misspecified learning rate. ({\bf Top}) Detection under a very small learning rate having value $\rho=0.001$ that leads to under-fitting of the data. ({\bf Bottom}) Detection under a very large learning with value $\rho=0.5$ that leads to over-fitting. Despite that both rather bad choices of the learning rate result in poor fits to the data, changepoint detection remains accurate with no errors.} 
\label{fig:toyregression1_effectlr}
\end{figure*}

\begin{figure*}[!htb]
\centering
\begin{tabular}{cc}
{\includegraphics[scale=0.19]  
{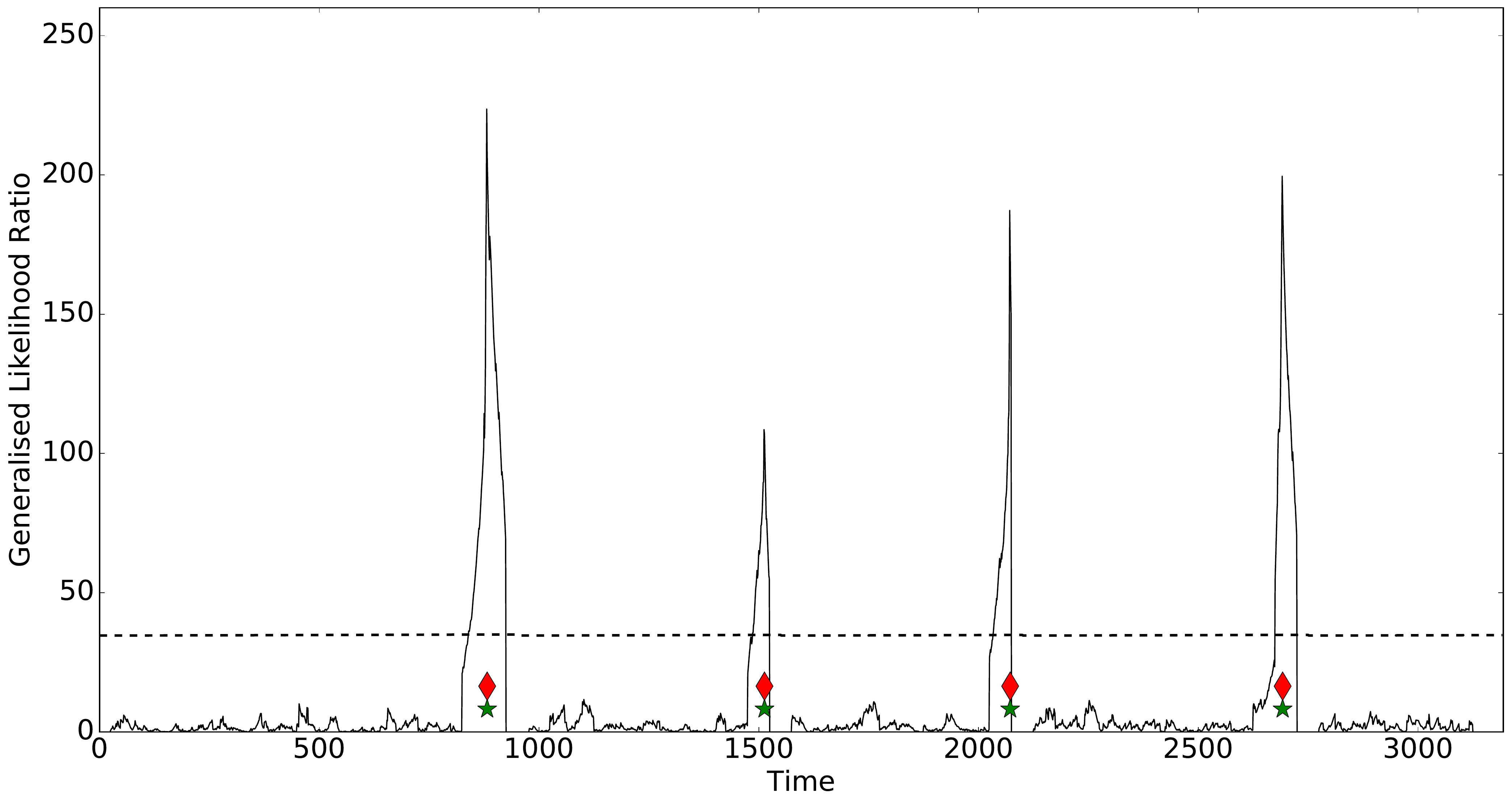}} &
{\includegraphics[scale=0.19]  
{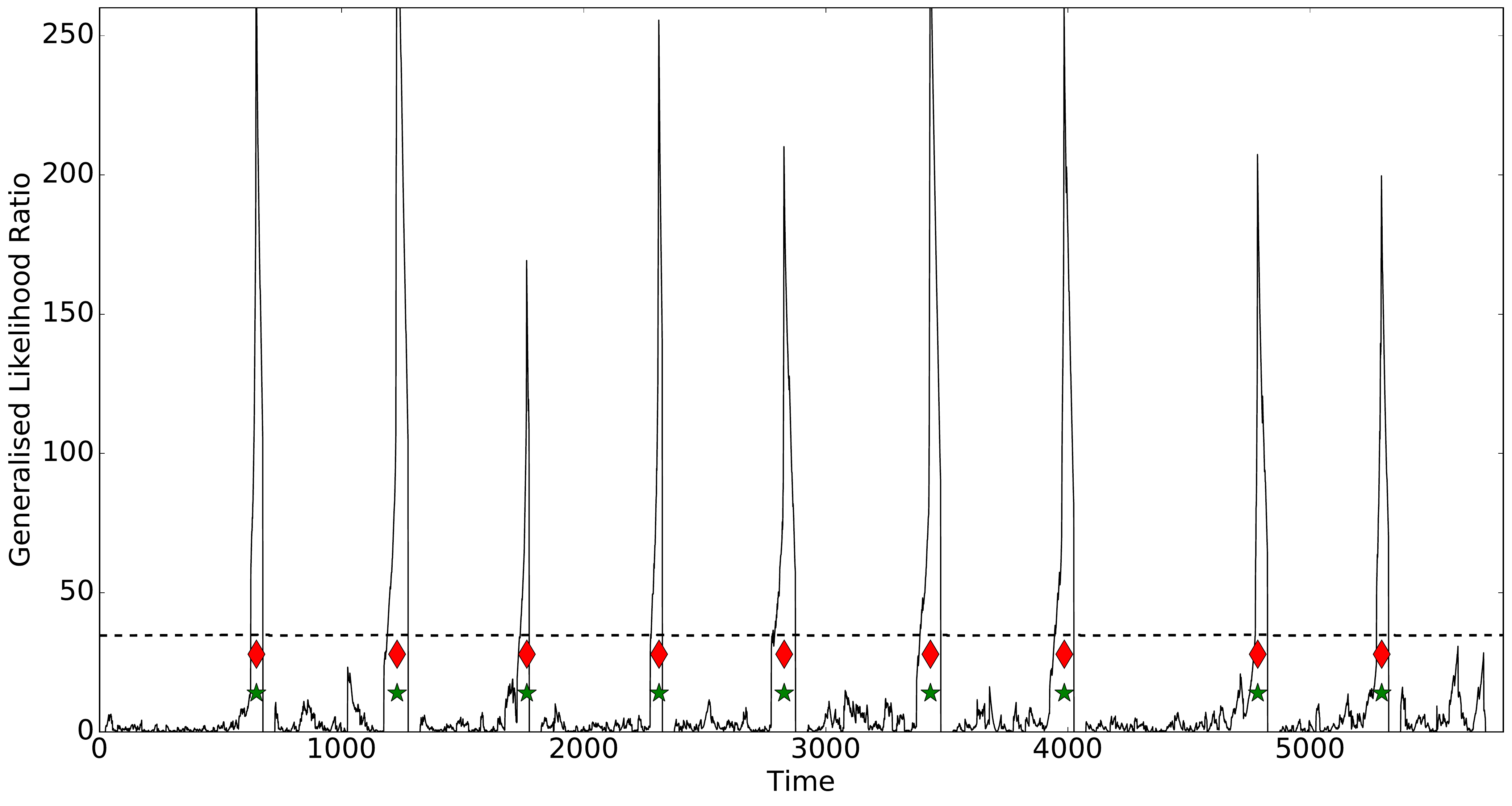}}
 \\
 (a) Split-MNIST   & (b) Permuted-MNIST \\
{\includegraphics[scale=0.19]  
{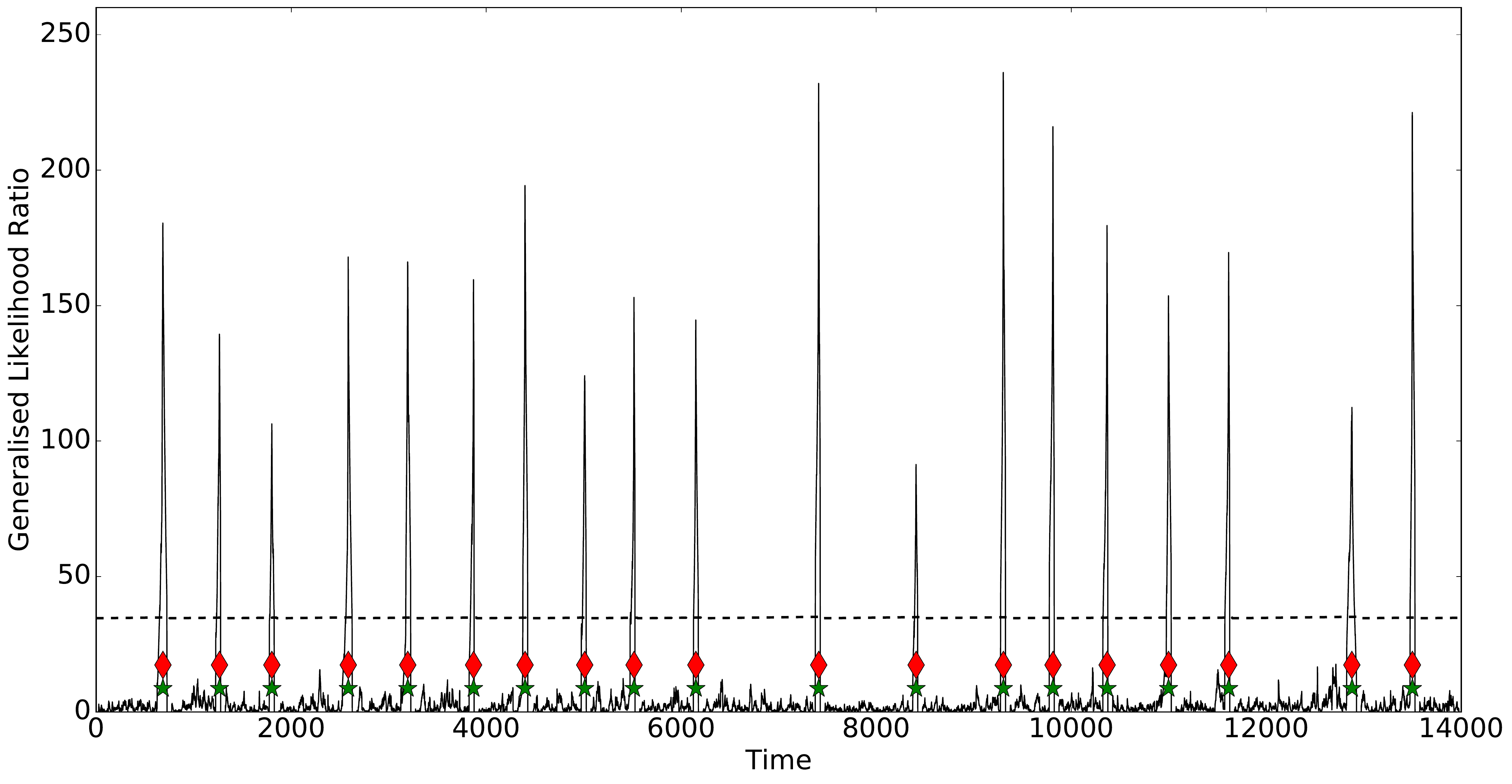}}  &
{\includegraphics[scale=0.19]  
{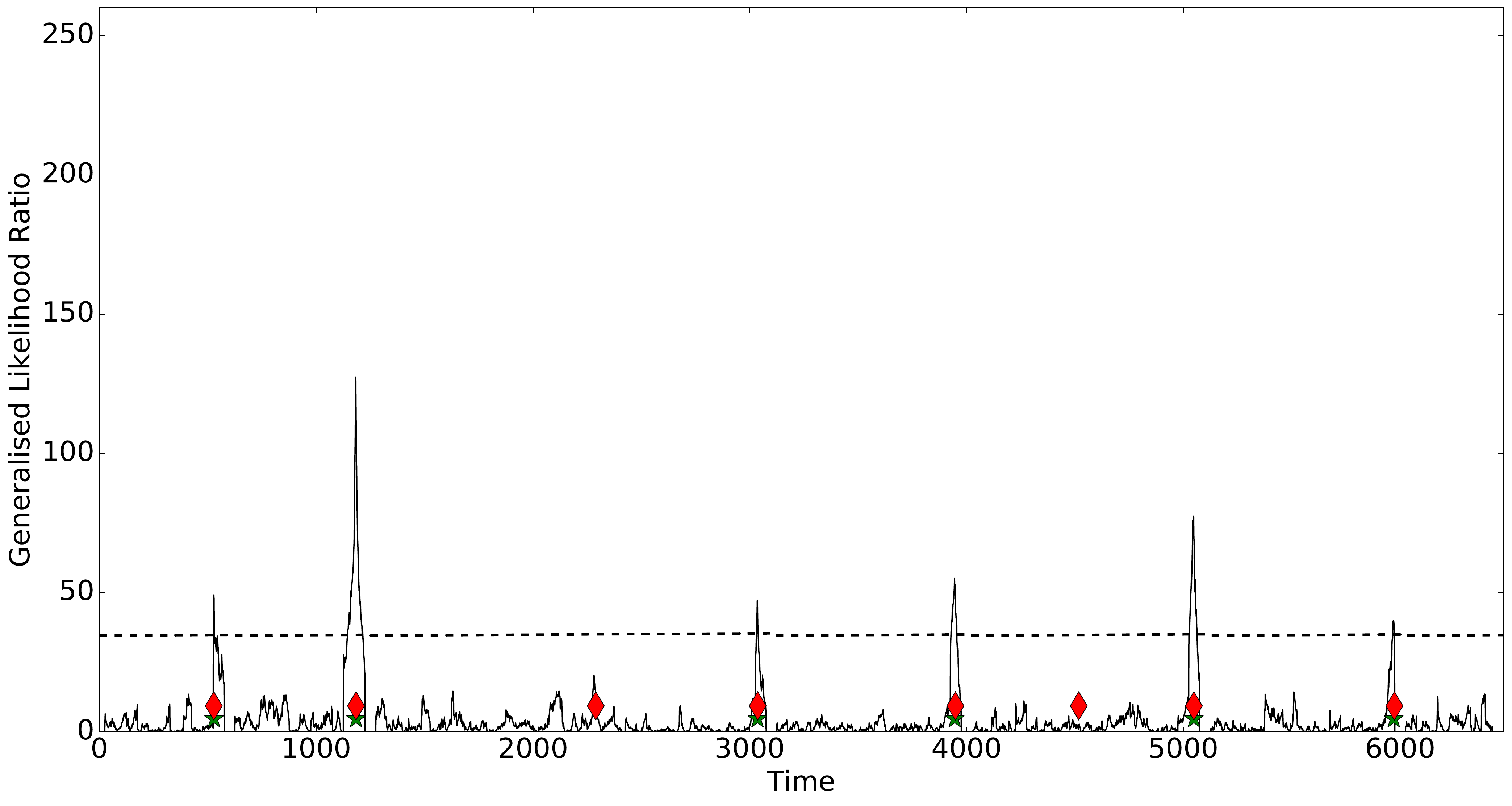}} \\
(c)  CIFAR100  &  (d) Incr-Class-MNIST
\end{tabular}
\caption{GLR test values, $ - 2 \log \Lambda_{\tau}$,  for Split-MNIST, Permuted-MNIST,  CIFAR100 and 
Incr-Class-MNIST.  The mini-batch size for all cases was $10$.} 
\label{fig:mnist}
\end{figure*}

\subsection{Experiments on Continual Learning}

In all CL experiments throughout this section
the proposed Algorithm \ref{alg:checkpoint_detect},
checkpoint-based
changepoint detection (CheckpointCD) is applied, in conjunction with Algorithm \ref{alg:continual_learning}, 
with the following settings:
$$
\delta=10^{-4}, \eta=0.99, 
T=100,\alpha = \lfloor T/4 \rfloor=25.
$$ 
Note that $\eta=0.99$ and $\alpha = \lfloor T/4 \rfloor$ 
are default values, while $\delta$ and
$T$ were specified 
by few preliminary runs on one of the datasets (Split-MNIST). I.e.\ 
the cutoff value of the Type I error was set to $\delta=10^{-4}$ to maximize performance (Jaccard index) on Split-MNIST while for all remaining experiments the same cutoff
is used and is never re-optimized.
The effect of the window size 
$T$ is also analyzed in Fig.\  \ref{fig:windowsizeeffect}.

As a strong baseline for comparison 
we consider Bayesian online changepoint detection
(BayesCD)
by \citet{Adams07bayesianonline}; see
also \citet{Fearnhead2007}.
We define an instant of this method
that is fully applicable to complex models 
such as deep neural networks. This is expressed by 
treating the one-step predictive 
scores $v_t = v(y_t,\theta_{t-1})$
(averaged over the mini-batch at time $t$ so they are close to normality) as sequential observations 
following a univariate Gaussian, $y_t \sim \mathcal{N}(y_t|\mu,\sigma^2)$, where the parameters $(\mu,\sigma^2)$
are task-specific. 
Then, the algorithm detects when $(\mu,\sigma^2)$ undergoes an abrupt change, by performing 
full Bayesian inference and estimating 
recursively the marginal posterior probability 
of each time being a changepoint, i.e.\ 
the so-called task or run length value 
to return to zero value \citep{Adams07bayesianonline}.  
This involves   
placing a conjugate normal inverse-gamma
prior on $(\mu,\sigma^2)$\footnote{The values of the hyperparameters where chosen as $\mu_0=0, \alpha=0.1, \beta=\kappa=1$.}:
$$(\mu,\sigma^2) \sim \mathcal{N}(\mu|\mu_0, \sigma^2/\kappa) IG(\sigma^2|\alpha,\beta),
$$
together with a prior distribution $p(\tau)$ over changepoints, defined through a Hazard function on the run length \citep{Adams07bayesianonline}, that 
models the prior probability of each time being a changepoint. Then Bayesian online learning requires  marginalizing out all unknowns, i.e.\ 
$(\mu,\sigma^2)$ and  the run length.
Because of the conjugate and Markov structure of the model all 
computations are analytic and the marginal posterior 
probability of a changepoint, $p(t=\tau|\text{data})$, across time follows a simple and efficient recursion; 
see \citet{Adams07bayesianonline} for full details. 
To apply the algorithm to CL we need to choose a cut-off threshold for  
$p(t=\tau|\text{data})$ that will
allow to claim a changepoint. We consider a search over different cut-offs and we report 
the best-performing values in Table 1.  As a changepoint prior we consider a constant hazard $H = 1/500$. 

We also included in the comparison 
a simpler (SimpleCD) baseline based on 
purely online 
statistical testing scheme (without requiring storage of checkpoints)  using
the one-step ahead predictive score values $\bfv_t = \{ v(y_t^i, \theta_{t-1})\}_{i=1}^b$, 
where here $\bfv_t$ is a vector of $b$ values and
$b$ is the mini-batch size. Then a standard paired Welch's $t$ 
test $t(\bfv_{t-1}, \bfv_t)$ can be used to detect a changepoint by using a cut-off critical value. In all experiments we considered a set of different critical values and we report the best-performing one 
in Table 1.  

Furthermore, for both BayesCD and SimpleCD algorithms we added the constraint that after a detection 
the algorithm must 
wait $T=100$ time steps to search for a new changepoint, i.e.\ the minimum  distance between two consecutive detections was set to $T$. 
Without this constraint the behaviour of these algorithms can become very noisy resulting in many false positive detections around a previous detected changepoint. Note that this $T$ minimum distance constraint is by definition satisfied by CheckpointCD, as shown in Algorithm \ref{alg:checkpoint_detect}, where $T$ is the window size hyperparameter.

\begin{table*}
\caption{Average Jaccard index scores, with one standard deviations, and tolerance 5
on all CL changepoint detection tasks. The 
numbers inside brackets for the BayesCD 
method indicate different cut-offs 
in the changepoint posterior probability $p(t=\tau|\text{data})$.}
\label{table:allresutls}
\begin{small}
\begin{center}
\begin{tabular}{llcccccr}
\hline\noalign{\smallskip}
Dataset & method & batch size=10 & batch size=20 & batch size=50 &  batch size=100\\
\noalign{\smallskip}\hline\noalign{\smallskip}
Split-MNIST & CheckpointCD  & ${\bf 1.00} \pm 0.00$ & ${\bf 1.00} \pm 0.00$ & ${\bf 1.00} \pm 0.00$ & ${\bf 1.00} \pm 0.00$ \\
& BayesCD(0.3) & $0.38 \pm 0.12$ & $0.71 \pm 0.12$ & $0.93 \pm 0.15$ & ${\bf 1.00} \pm 0.00$ \\
& BayesCD(0.4) & $0.48 \pm 0.16$ & $0.89 \pm 0.14$ &  ${\bf 1.00} \pm 0.00$ & ${\bf 1.00} \pm 0.00$ \\
& BayesCD(0.5) & $0.58 \pm 0.16$ & $0.91 \pm 0.11$ & ${\bf 1.00} \pm 0.00$ & $0.97 \pm 0.07$ \\
& BayesCD(0.6) & $0.71 \pm 0.18$ & $0.93 \pm 0.11$ & $0.97 \pm 0.07$ & $0.95 \pm 0.10$ \\
& SimpleCD & $0.35 \pm 0.21$ & $0.82 \pm 0.13$ & $0.98 \pm 0.06$ & $0.92 \pm 0.13$ \\
\noalign{\smallskip}\hline\noalign{\smallskip}
Permuted-MNIST & CheckpointCD  & ${\bf 0.77} \pm 0.13$ & ${\bf 1.00} \pm 0.00$ & $0.98 \pm 0.06$ & ${\bf 1.00} \pm 0.00$ \\
& BayesCD(0.3) & $0.42 \pm 0.09$ & $0.97 \pm 0.05$ & $0.99 \pm 0.03$ & ${\bf 1.00} \pm 0.00$ \\
& BayesCD(0.4) & $0.44 \pm 0.09$ & $0.93 \pm 0.12$ & ${\bf 1.00} \pm 0.00$ & ${\bf 1.00} \pm 0.00$ \\
& BayesCD(0.5) & $0.57 \pm 0.11$ & ${\bf 1.00} \pm 0.00$ & ${\bf 1.00} \pm 0.00$ & ${\bf 1.00} \pm 0.00$ \\
& BayesCD(0.6) & $0.66 \pm 0.13$ & $0.99 \pm 0.03$ & ${\bf 1.00} \pm 0.00$ & ${\bf 1.00} \pm 0.00$ \\
& SimpleCD & $0.30 \pm 0.08$ & $0.92 \pm 0.06$ & $0.96 \pm 0.07$ & $0.99 \pm 0.03$ \\
 \noalign{\smallskip}\hline\noalign{\smallskip}
Split-CIFAR100 & CheckpointCD   & ${\bf 0.98} \pm 0.04$ & ${\bf 1.00} \pm 0.00$ & $0.99 \pm 0.03$ & ${\bf 1.00} \pm 0.00$ \\
& BayesCD(0.3) & $0.96 \pm 0.04$ & $0.95 \pm 0.07$ & $0.98 \pm 0.02$ & ${\bf 1.00} \pm 0.00$ \\
& BayesCD(0.4) & $0.93 \pm 0.04$ & $0.99 \pm 0.02$ & $0.99 \pm 0.02$ & ${\bf 1.00} \pm 0.00$ \\
& BayesCD(0.5) & $0.93 \pm 0.05$ & ${\bf 1.00} \pm 0.00$ & ${\bf 1.00} \pm 0.00$ & ${\bf 1.00} \pm 0.00$ \\
& BayesCD(0.6) & $0.85 \pm 0.08$ & $0.99 \pm 0.02$ & ${\bf 1.00} \pm 0.00$ & ${\bf 1.00} \pm 0.00$ \\
& SimpleCD & $0.14 \pm 0.08$ & $0.76 \pm 0.06$ & $0.99 \pm 0.02$ & $0.95 \pm 0.04$ \\
\noalign{\smallskip}\hline\noalign{\smallskip}
Incr-Class-MNIST & CheckpointCD   & ${\bf 0.72} \pm 0.14$ & ${\bf 0.96} \pm 0.06$ & ${\bf 0.96} \pm 0.09$ & ${\bf 0.98} \pm 0.07$ \\
& BayesCD(0.3) & $0.41 \pm 0.09$ & $0.64 \pm 0.15$ & $0.85 \pm 0.10$ & $0.81 \pm 0.07$ \\
& BayesCD(0.4) & $0.41 \pm 0.12$ & $0.75 \pm 0.11$ & $0.82 \pm 0.10$ & $0.85 \pm 0.09$ \\
& BayesCD(0.5) & $0.45 \pm 0.13$ & $0.68 \pm 0.12$ & $0.74 \pm 0.08$ & $0.86 \pm 0.07$ \\
& BayesCD(0.6) & $0.43 \pm 0.11$ & $0.60 \pm 0.17$ & $0.81 \pm 0.12$ & $0.81 \pm 0.06$ \\
& SimpleCD & $0.04 \pm 0.05$ & $0.09 \pm 0.08$ & $0.58 \pm 0.19$ & $0.89 \pm 0.09$ \\
\noalign{\smallskip}\hline\noalign{\smallskip}
Incr-Class-CIFAR44 & CheckpointCD   & ${\bf 0.24} \pm 0.05$ & ${\bf 0.80} \pm 0.06$ & ${\bf 0.98} \pm 0.03$ & ${\bf 0.98} \pm 0.05$ \\
& BayesCD(0.3) & $0.07 \pm 0.04$ & $0.43 \pm 0.04$ & $0.81 \pm 0.05$ & $0.96 \pm 0.03$ \\
& BayesCD(0.4) & $0.04 \pm 0.02$ & $0.36 \pm 0.09$ & $0.76 \pm 0.04$ & $0.92 \pm 0.03$ \\
& BayesCD(0.5) & $0.02 \pm 0.02$ & $0.26 \pm 0.06$ & $0.71 \pm 0.06$ & $0.88 \pm 0.04$ \\
& BayesCD(0.6) & $0.02 \pm 0.02$ & $0.20 \pm 0.05$ & $0.64 \pm 0.05$ & $0.86 \pm 0.05$ \\
& SimpleCD & $0.00 \pm 0.01$ & $0.01 \pm 0.01$ & $0.12 \pm 0.04$ & $0.74 \pm 0.05$ \\
\noalign{\smallskip}\hline
\end{tabular}
\end{center}
\end{small}
\end{table*}

\subsubsection{Datasets, CL tasks  and results}

We first
applied the algorithms to three standard CL 
 classification benchmarks: Split-MNIST, 
 Permuted-MNIST and Split-CIFAR100. 
Split-MNIST, introduced by \citet{zenke2017continual}, assumes that five binary classification tasks are constructed from the original MNIST \citep{lecun-mnisthandwrittendigit-2010} hand-written digit classes and they are sequentially  presented to the algorithm in the following order: 0/1, 2/3, 4/5, 6/7, 8/9. Each task is a binary classification problem 
so that any mini-batch $y_t = \{x_t^i, c_t^i\}_{i=1}^b$ is such that each $c_t^i \in \{0,1\}$, i.e.\ the task identity cannot be revealed by inspecting these binary labels. In Permuted-MNIST \citep{goodfellow2013empirical,kirkpatrick2017overcoming}, each task is a variant of the initial $10$-class MNIST classification problem where all input pixels have undergone a fixed (random) permutation. A sequence of 10 tasks is considered so that each task is  a $10$-class classification problem. For the Split-CIFAR100 we assume a sequence
of $20$ tasks of $5$ classes each from the
initial CIFAR100 dataset that contains images of $100$ visual categories, indexed as $1,2,\ldots,100$.  We follow \citet{lopez2017gradient}, 
so that the first task contains the classes $(1,2,3,4,5)$ the second $(6,7,8,9,10)$ and etc.

\begin{figure}[!htb]
\centering
\begin{tabular}{c}
{\includegraphics[scale=0.5]  
{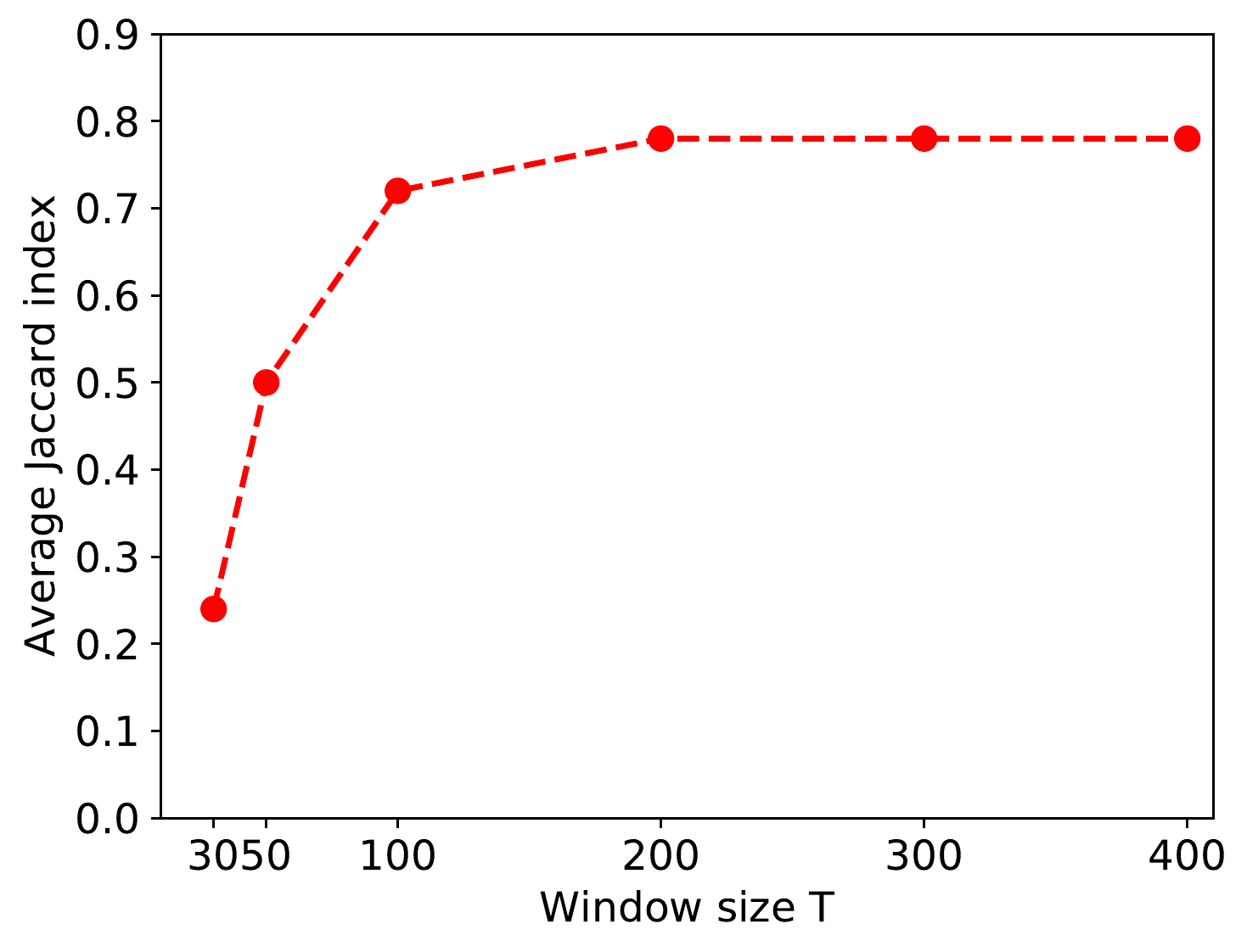}}
\end{tabular}
\caption{Jaccard index averaged over 100 repeats in Incr-Class-MNIST for varying window sizes $T$ and fixed minibatch size 10.} 
\label{fig:windowsizeeffect}
\end{figure}

For Split and Permuted-MNIST we consider a neural network with a shared representation $\phi(x;\theta^s)$
obtained by a fully connected multi-layer perceptron  (MLP) network with two hidden layers of size $100$ and rectified linear units (ReLU) activations. 
For the Split-CIFAR100 we used a much more complex 
residual network architecture \citep{HeZRS16} 
with $18$ layers (ResNet-18), as used by
\citet{lopez2017gradient}.   
We created a simulated experiment where a true task 
changepoint
can occur independently at each time step with some small probability with the additional constraint that we need to observe at least a minimum number of mini-batches from each true task before the next changepoint.\footnote{In all experiments
this number was $500$ and the probability of 
having a changepoint at each step (after the $500$ steps) was $0.005$.} We compare the 
proposed CheckpointCD method with BayesCD and 
SimpleCD under four different values 
of the mini-batch size $b$: 10, 20, 50, 100. 
Training of each CL model was based on Algorithm 
\ref{alg:continual_learning}, modified accordingly 
for BayesCD and SimpleCD so that to apply their respective 
changepoint detection subroutine instead of Algorithm
\ref{alg:checkpoint_detect} used by CheckpointCD. 
The learning rate sequence $\rho_t$ in stochastic gradient optimization of the objective  in Eq.\
\eqref{eq:CLloss} was set in a dimension-wise manner 
by using the Adam optimizer \citep{kingma2014method} which the standard approach for training neural networks; see Appendix for further details.

To measure performance we used the intersection over union score 
(also called Jaccard index) defined as the number of correctly detected changepoints divided by the union of the detected and the true changepoints 
$$
\text{Jaccard index} = \frac{| True \cap Detected|}{| True \cup Detected|} \in  [0,1],
$$
where the larger the score the better.
Note that the Jaccard  index is the hardest among 
other related scores such as recall, 
precision and F1, which are softer/upper bounds (i.e.\ closer to 1) than Jaccard index. For completeness full tables with precision and recall score values are given in the Appendix. 
When computing Jaccard index we also 
allow some tolerance when declaring that a pair 
$(\tau^{tr},\tau^{det})$ of a true and detected locations correspond to the same true changepoint $\tau^{tr}$. 
A tolerance equal to $5$ time steps
distance was used which means 
that only when $|\tau^{tr} - \tau^{det}| \leq 5$ the detected $\tau^{det}$ is considered correct. 

Furthermore, in order to create harder changepoint 
detection problems, we consider two 
class-incremental variants of MNIST and
and CIFAR so that each task differs from 
the previous task by changing only a single class
and without affecting the labeling of the remaining  
classes. This creates the Incr-Class-MNIST 
with 9 tasks: $0/1, 2/1,  2/3, \ldots, 8/7, 8/9$.
To speed up the experiment in CIFAR 
we consider only the first $44$ classes
and create a very challenging changepoint detection problem, Incr-Class-CIFAR44, with $40$ tasks: (1,2,3,4,5), 
(6,2,3,4,5), (6,7,3,4,5) and etc.  

Table \ref{table:allresutls} reports all results
obtained by $10$ random repetitions   
of the experiments. 
The table shows that the proposed algorithm 
is consistently better than the other methods 
and it provides accurate changepoint 
detection even with mini-batch size 
as small as $20$. Notice also that, as expected,
all methods improve as the mini-batch size 
increases.  

Fig.\ \ref{fig:mnist} visualizes the 
GLR values, $- 2\log \Lambda_{\tau}$, in some of the  
runs with Split-MNIST, Permuted-MNIST, CIFAR100
and Incr-Class-MNIST.
Similarly, to Fig.\ \ref{fig:toyregression1}  most  
changepoints are detected by our algorithm and every changepoint corresponds to
a clear spike in the GLR statistics. Note 
that the plots in Fig.\ \ref{fig:mnist} are obtained
for the most difficult case where the data mini-batch 
size when fitting the CL model is $10$, while for larger mini-batch sizes the detection is more robust and the spikes of the GLR statitics become sharper.

Finally, Fig.\ \ref{fig:windowsizeeffect} studies 
the effect of the window size $T$ in changepoint detection performance, which shows that too small value of $T$ could decrease the performance presumably due to very small sample size 
when performing each test. This corroborate our discussion in Section \ref{sec:hyper-parametres} that a large value of $T$ increases the power of hypothesis testing, although it should not be larger than the minimum length of a task from our prior knowledge to avoid including multiple changepoints in the same testing window.

\section{Discussion}

We have introduced an algorithm for online changepoint
detection that can be easily combined with online 
learning of complex non-linear models such as neural networks. We have demonstrated the effectiveness of our method in challenging continual learning tasks
for automatically detecting the task changepoints. 
The use of checkpoints allowed us to define a sequential hypothesis testing procedure to control a
predetermined Type I error upper-bound, and evaluate empirically the overall performance of both Type I and II error using Jaccard index and or other metrics.

The simplicity of checkpoints means that practitioners 
can use them for changepoint detection without having to modify their preferred way of estimating or fitting models to data. For instance,  in deep learning \citep{lecun2015deeplearning} the dominant 
approach to model fitting is based on point parameter 
estimation with stochastic gradient descent (SGD), where the model 
is typically a neural network. As seen in this paper 
this can be easily combined with checkpoints to detect changepoints, without having to modify this standard SGD model fitting procedure.
Similarly, checkpoints could be also combined with other ways of fitting models to data, e.g.\ Bayesian approaches, since the 
essence of the algorithm is a \emph{cached model representation} 
(not necessarily a point parametric estimate)
that together with a prediction score can detect changes. For instance, if we follow a Bayesian model estimation approach,  online learning will require updating a posterior probability distribution $p_t(\theta)$ through time. Then, a checkpoint becomes an early version of this posterior distribution, i.e.\ $p_{t-T}(\theta)$, while the predictive score will be obtained by averaging some function under this checkpoint posterior. In this setting, the use of the algorithm remains the same and the only thing we need to modify, to accommodate this Bayesian way of model fitting, is to change the online model update rule (i.e.\ the line $\theta_t = \mathrm{update\_step}(\theta_{t-1}, y_t)$ in Algorithm \ref{alg:checkpoint_detect}) together with the definition  
of the score function $v(\cdot)$, where the latter should correspond now to a Bayesian predictive score.
While Bayesian model fitting is
very difficult for complex models, such as neural networks, it is certainly feasible for simple conjugate Bayesian models
where we could apply the checkpoint method as outlined above. We leave the experimentation with this more Bayesian way of using checkpoints as a future work.   

Finally, another topic for future research is to consider checkpoints to detect changes at different time scales, 
such as long-term and short-term changes. 


%
%

\bibliographystyle{spbasic}      

\bibliography{refs}


\appendix

%

\section{Quantile of $Z$ statistics in Algorithm \ref{alg:checkpoint_detect_subroutine}}

For every window size $T$, we compute the quantile of the $Z$ statistics (threshold $h(\delta)=\mathrm{quantile}(1 - \delta)$ in Algorithm \ref{alg:checkpoint_detect_subroutine}) numerically with $10^8$ simulations, and fit a linear function for $h(\delta)$. Table \ref{table:z_quantile_30,table:z_quantile_50,table:z_quantile_100,table:z_quantile_200,table:z_quantile_300,table:z_quantile_400} show the computed threshold values as a function of $T$, border size $\alpha$, and eror $\delta$. We also show the fitted line of $h_T(\delta)$ when $\alpha = \lfloor T/4 \rfloor$ as used in the experiments in Figure \ref{fig:error_threshold_line}. We observe that the threshold is close to convergence when $T\geq 100$.

\begin{table}[h!]
\centering
\caption{$Z$-statistics Quantile for T=30.}
\label{table:z_quantile_30}
\begin{small}
\begin{tabular}{ c | c c}
\hline \\ [-1.5ex]
Error $\delta$ & $\alpha=5$ & $\alpha=7$ \\
\hline \\ [-1.5ex]
0.1 & 10.024 & 9.245 \\
0.05 & 11.909 & 11.084 \\
0.01 & 16.089 & 15.173 \\
0.001 & 21.837 & 20.786 \\
0.0001 & 27.464 & 26.273 \\
1e-05 & 33.124 & 31.706 \\
1e-06 & 38.882 & 37.244 \\
\hline 
\end{tabular}
\end{small}
\end{table}

\begin{table}[h!]
\centering
\caption{$Z$-statistics Quantile for T=50.}
\label{table:z_quantile_50}
\begin{small}
\begin{tabular}{ c | c c c}
\hline \\ [-1.5ex]
Error $\delta$ & $\alpha=5$ & $\alpha=10$ & $\alpha=12$ \\
\hline \\ [-1.5ex]
0.1 & 10.661 & 9.318 & 8.948 \\
0.05 & 12.518 & 11.095 & 10.711 \\
0.01 & 16.633 & 15.043 & 14.635 \\
0.001 & 22.283 & 20.454 & 20.018 \\
0.0001 & 27.824 & 25.721 & 25.253 \\
1e-05 & 33.200 & 30.771 & 30.429 \\
1e-06 & 38.841 & 35.969 & 35.716 \\
\hline 
\end{tabular}
\end{small}
\end{table}

\begin{table}[h!]
\centering
\caption{$Z$-statistics Quantile for T=100.}
\label{table:z_quantile_100}
\begin{small}
\begin{tabular}{ c | c c c c c}
\hline \\ [-1.5ex]
Error $\delta$ & $\alpha=5$ & $\alpha=10$ & $\alpha=15$ & $\alpha=20$ & $\alpha=25$ \\
\hline \\ [-1.5ex]
0.1 & 11.270 & 10.303 & 9.726 & 9.244 & 8.789 \\
0.05 & 13.099 & 12.069 & 11.471 & 10.976 & 10.507 \\
0.01 & 17.143 & 15.971 & 15.334 & 14.818 & 14.332 \\
0.001 & 22.705 & 21.315 & 20.628 & 20.088 & 19.578 \\
0.0001 & 28.152 & 26.500 & 25.769 & 25.180 & 24.676 \\
1e-05 & 33.576 & 31.591 & 30.852 & 30.249 & 29.740 \\
1e-06 & 38.957 & 36.549 & 35.712 & 35.165 & 34.720 \\
\hline 
\end{tabular}
\end{small}
\end{table}

\begin{table}[h!]
\centering
\caption{$Z$-statistics Quantile for T=200.}
\label{table:z_quantile_200}
\begin{small}
\begin{tabular}{ c | c c c c}
\hline \\ [-1.5ex]
Error $\delta$ & $\alpha=5$ & $\alpha=20$ & $\alpha=35$ & $\alpha=50$ \\
\hline \\ [-1.5ex]
0.1 & 11.780 & 10.257 & 9.505 & 8.833 \\
0.05 & 13.587 & 11.989 & 11.223 & 10.537 \\
0.01 & 17.588 & 15.817 & 15.028 & 14.323 \\
0.001 & 23.080 & 21.057 & 20.248 & 19.516 \\
0.0001 & 28.482 & 26.173 & 25.320 & 24.611 \\
1e-05 & 33.874 & 31.153 & 30.254 & 29.514 \\
1e-06 & 38.938 & 35.961 & 35.187 & 34.499 \\
\hline 
\end{tabular}
\end{small}
\end{table}

\begin{table}[h!]
\centering
\caption{$Z$-statistics Quantile for T=300.}
\label{table:z_quantile_300}
\begin{small}
\begin{tabular}{ c | c c c c c}
\hline \\ [-1.5ex]
Error $\delta$ & $\alpha=5$ & $\alpha=25$ & $\alpha=45$ & $\alpha=65$ & $\alpha=75$ \\
\hline \\ [-1.5ex]
0.1 & 12.023 & 10.466 & 9.769 & 9.171 & 8.877 \\
0.05 & 13.823 & 12.194 & 11.486 & 10.879 & 10.578 \\
0.01 & 17.799 & 16.007 & 15.285 & 14.666 & 14.357 \\
0.001 & 23.271 & 21.214 & 20.471 & 19.848 & 19.534 \\
0.0001 & 28.667 & 26.270 & 25.498 & 24.868 & 24.546 \\
1e-05 & 34.061 & 31.411 & 30.528 & 29.960 & 29.650 \\
1e-06 & 39.142 & 36.424 & 35.684 & 35.072 & 34.756 \\
\hline 
\end{tabular}
\end{small}
\end{table}

\begin{table}[h!]
\centering
\caption{$Z$-statistics Quantile for T=400.}
\label{table:z_quantile_400}
\begin{small}
\begin{tabular}{ c | c c c c c}
\hline \\ [-1.5ex]
Error $\delta$ & $\alpha=5$ & $\alpha=35$ & $\alpha=65$ & $\alpha=95$ & $\alpha=100$ \\
\hline \\ [-1.5ex]
0.1 & 12.182 & 10.431 & 9.682 & 9.017 & 8.907 \\
0.05 & 13.974 & 12.154 & 11.396 & 10.721 & 10.609 \\
0.01 & 17.935 & 15.957 & 15.187 & 14.496 & 14.382 \\
0.001 & 23.389 & 21.160 & 20.378 & 19.681 & 19.567 \\
0.0001 & 28.729 & 26.232 & 25.423 & 24.730 & 24.621 \\
1e-05 & 34.091 & 31.130 & 30.309 & 29.661 & 29.568 \\
1e-06 & 39.388 & 35.642 & 35.106 & 34.543 & 34.319 \\
\hline 
\end{tabular}
\end{small}
\end{table}

\begin{figure}[tbh!]
\centering
\includegraphics[width=\columnwidth]  
{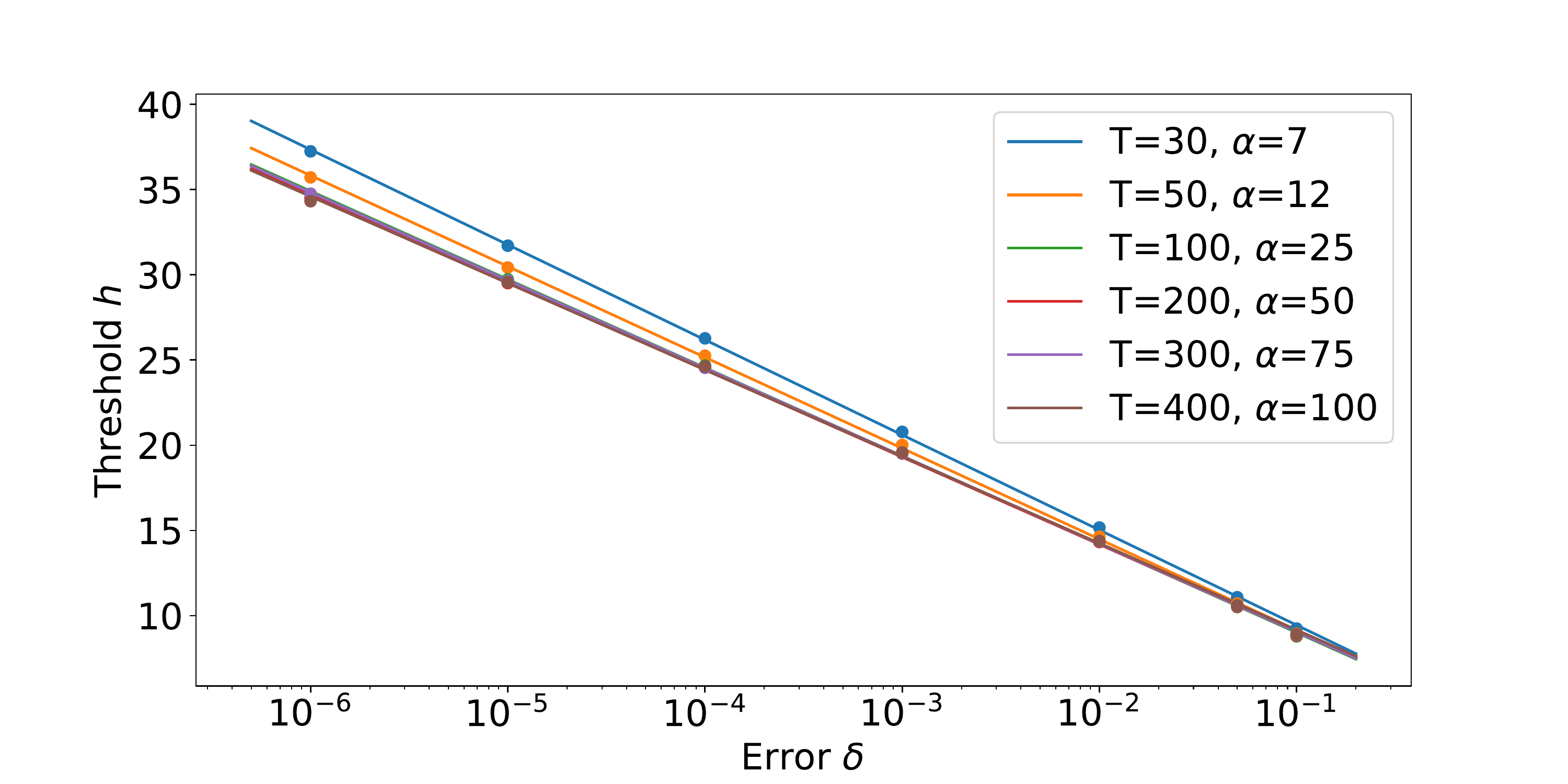}
\caption{Threshold as a function of error $h(\delta)$ for different window size $T$. The border size $\alpha$ is set as $\lfloor T/4 \rfloor$. The circles are computed numerically, and the straight lines are linearly fitted functions.}
\label{fig:error_threshold_line}
\end{figure}

 \section{Further details and results} 
 
 For all CL experiments in Section \ref{sec:experiments} we used the Adam optimizer 
 \citep{kingma2014method} with its
 default parameter settings and with 
 base learning rate value $\alpha = 0.1/b$ 
 where $b$ is the mini-batch size in each experiment. The hyperparameter $\lambda$ in the loss function 
 in Eq.\ \eqref{eq:CLloss} was set to $\lambda=1$ 
 and the size of the replay buffer 
 of each previous task was set to $100$, i.e.\ 
 $|\mathcal{R}_i|=100$. 
 
Table \ref{table:precision} provides the precision scores
and Table \ref{table:recall} the recalls for all 
algorithms applied to the CL benchmarks.  
 
\begin{table*}[t]
\caption{Average precision, with one-standard deviations, and tolerance 5
on all CL changepoint detection tasks. The 
numbers inside brackets for the BayesCD 
method indicate different cut-offs 
in the changepoint posterior probability $p(t=\tau|\text{data})$.}
\label{table:precision}
\begin{small}
\begin{center}
\begin{tabular}{llcccccr}
\hline\noalign{\smallskip}
Dataset & method & batch size=10 & batch size=20 & batch size=50 &  batch size=100\\
\noalign{\smallskip}\hline\noalign{\smallskip}
Split-MNIST & CheckpointCD & $1.00 \pm 0.00$ & $1.00 \pm 0.00$ & $1.00 \pm 0.00$ & $1.00 \pm 0.00$ \\
& BayesCD(0.3) & $0.41 \pm 0.12$ & $0.72 \pm 0.11$ & $0.94 \pm 0.12$ & $1.00 \pm 0.00$ \\
& BayesCD(0.4) & $0.51 \pm 0.15$ & $0.89 \pm 0.14$ & $1.00 \pm 0.00$ & $1.00 \pm 0.00$ \\
& BayesCD(0.5) & $0.63 \pm 0.14$ & $0.96 \pm 0.08$ & $1.00 \pm 0.00$ & $1.00 \pm 0.00$ \\
& BayesCD(0.6) & $0.88 \pm 0.15$ & $1.00 \pm 0.00$ & $1.00 \pm 0.00$ & $1.00 \pm 0.00$ \\
& SimpleCD & $0.62 \pm 0.31$ & $0.96 \pm 0.09$ & $0.98 \pm 0.06$ & $0.94 \pm 0.10$ \\
\noalign{\smallskip}\hline\noalign{\smallskip}
Permuted-MNIST &
CheckpointCD & $0.79 \pm 0.12$ & $1.00 \pm 0.00$ & $0.99 \pm 0.03$ & $1.00 \pm 0.00$ \\
& BayesCD(0.3) & $0.44 \pm 0.09$ & $0.97 \pm 0.05$ & $0.99 \pm 0.03$ & $1.00 \pm 0.00$ \\
& BayesCD(0.4) & $0.45 \pm 0.09$ & $0.94 \pm 0.10$ & $1.00 \pm 0.00$ & $1.00 \pm 0.00$ \\
& BayesCD(0.5) & $0.59 \pm 0.12$ & $1.00 \pm 0.00$ & $1.00 \pm 0.00$ & $1.00 \pm 0.00$ \\
& BayesCD(0.6) & $0.66 \pm 0.13$ & $0.99 \pm 0.03$ & $1.00 \pm 0.00$ & $1.00 \pm 0.00$ \\
& SimpleCD & $0.77 \pm 0.18$ & $0.94 \pm 0.05$ & $0.97 \pm 0.05$ & $0.99 \pm 0.03$ \\
 \noalign{\smallskip}\hline\noalign{\smallskip}
Split-CIFAR100 &
CheckpointCD & $0.99 \pm 0.02$ & $1.00 \pm 0.00$ & $0.99 \pm 0.02$ & $1.00 \pm 0.00$ \\
& BayesCD(0.3) & $0.98 \pm 0.03$ & $0.95 \pm 0.07$ & $0.98 \pm 0.02$ & $1.00 \pm 0.00$ \\
& BayesCD(0.4) & $0.98 \pm 0.03$ & $0.99 \pm 0.02$ & $0.99 \pm 0.02$ & $1.00 \pm 0.00$ \\
& BayesCD(0.5) & $1.00 \pm 0.00$ & $1.00 \pm 0.00$ & $1.00 \pm 0.00$ & $1.00 \pm 0.00$ \\
& BayesCD(0.6) & $0.99 \pm 0.03$ & $1.00 \pm 0.00$ & $1.00 \pm 0.00$ & $1.00 \pm 0.00$ \\
& SimpleCD & $0.54 \pm 0.27$ & $0.94 \pm 0.07$ & $0.99 \pm 0.02$ & $0.96 \pm 0.04$ \\
\noalign{\smallskip}\hline\noalign{\smallskip}
Incr-Class-MNIST &
CheckpointCD & $0.95 \pm 0.08$ & $1.00 \pm 0.00$ & $0.97 \pm 0.05$ & $0.99 \pm 0.04$ \\
& BayesCD(0.3) & $0.47 \pm 0.08$ & $0.76 \pm 0.13$ & $0.97 \pm 0.07$ & $0.97 \pm 0.05$ \\
& BayesCD(0.4) & $0.52 \pm 0.12$ & $0.89 \pm 0.10$ & $1.00 \pm 0.00$ & $1.00 \pm 0.00$ \\
& BayesCD(0.5) & $0.62 \pm 0.15$ & $0.89 \pm 0.11$ & $0.99 \pm 0.04$ & $1.00 \pm 0.00$ \\
& BayesCD(0.6) & $0.73 \pm 0.16$ & $0.94 \pm 0.13$ & $1.00 \pm 0.00$ & $1.00 \pm 0.00$ \\
& SimpleCD & $0.11 \pm 0.14$ & $0.37 \pm 0.37$ & $0.92 \pm 0.11$ & $0.99 \pm 0.03$ \\
\noalign{\smallskip}\hline\noalign{\smallskip}
Incr-Class-CIFAR44 & 
CheckpointCD & $0.96 \pm 0.05$ & $0.96 \pm 0.02$ & $0.99 \pm 0.02$ & $0.99 \pm 0.02$ \\
& BayesCD(0.3) & $0.71 \pm 0.27$ & $0.89 \pm 0.06$ & $0.97 \pm 0.02$ & $1.00 \pm 0.01$ \\
& BayesCD(0.4) & $0.75 \pm 0.25$ & $0.94 \pm 0.05$ & $0.98 \pm 0.04$ & $0.99 \pm 0.01$ \\
& BayesCD(0.5) & $0.58 \pm 0.44$ & $0.95 \pm 0.10$ & $1.00 \pm 0.01$ & $1.00 \pm 0.00$ \\
& BayesCD(0.6) & $0.60 \pm 0.49$ & $0.98 \pm 0.05$ & $1.00 \pm 0.00$ & $1.00 \pm 0.00$ \\
& SimpleCD & $0.02 \pm 0.05$ & $0.11 \pm 0.17$ & $0.86 \pm 0.18$ & $0.96 \pm 0.03$ \\
\noalign{\smallskip}\hline
\end{tabular}
\end{center}
\end{small}
\end{table*}

\begin{table*}[t]
\caption{Average recall, with one-standard deviations, and tolerance 5
on all CL changepoint detection tasks. The 
numbers inside brackets for the BayesCD 
method indicate different cut-offs 
in the changepoint posterior probability $p(t=\tau|\text{data})$.}
\label{table:recall}
\begin{small}
\begin{center}
\begin{tabular}{llcccccr}
\hline\noalign{\smallskip}
Dataset & method & batch size=10 & batch size=20 & batch size=50 &  batch size=100\\
\noalign{\smallskip}\hline\noalign{\smallskip}
Split-MNIST &
CheckpointCD & $1.00 \pm 0.00$ & $1.00 \pm 0.00$ & $1.00 \pm 0.00$ & $1.00 \pm 0.00$ \\
& BayesCD(0.3) & $0.85 \pm 0.17$ & $0.97 \pm 0.07$ & $0.97 \pm 0.07$ & $1.00 \pm 0.00$ \\
& BayesCD(0.4) & $0.85 \pm 0.17$ & $1.00 \pm 0.00$ & $1.00 \pm 0.00$ & $1.00 \pm 0.00$ \\
& BayesCD(0.5) & $0.85 \pm 0.17$ & $0.95 \pm 0.10$ & $1.00 \pm 0.00$ & $0.97 \pm 0.07$ \\
& BayesCD(0.6) & $0.80 \pm 0.19$ & $0.93 \pm 0.11$ & $0.97 \pm 0.07$ & $0.95 \pm 0.10$ \\
& SimpleCD & $0.42 \pm 0.23$ & $0.85 \pm 0.12$ & $1.00 \pm 0.00$ & $0.97 \pm 0.07$ \\
\noalign{\smallskip}\hline\noalign{\smallskip}
Permuted-MNIST &
CheckpointCD & $0.97 \pm 0.05$ & $1.00 \pm 0.00$ & $0.99 \pm 0.03$ & $1.00 \pm 0.00$ \\
& BayesCD(0.3) & $0.91 \pm 0.07$ & $1.00 \pm 0.00$ & $1.00 \pm 0.00$ & $1.00 \pm 0.00$ \\
& BayesCD(0.4) & $0.91 \pm 0.10$ & $0.98 \pm 0.04$ & $1.00 \pm 0.00$ & $1.00 \pm 0.00$ \\
& BayesCD(0.5) & $0.94 \pm 0.06$ & $1.00 \pm 0.00$ & $1.00 \pm 0.00$ & $1.00 \pm 0.00$ \\
& BayesCD(0.6) & $0.99 \pm 0.03$ & $1.00 \pm 0.00$ & $1.00 \pm 0.00$ & $1.00 \pm 0.00$ \\
& SimpleCD & $0.34 \pm 0.10$ & $0.98 \pm 0.04$ & $0.99 \pm 0.03$ & $1.00 \pm 0.00$ \\
 \noalign{\smallskip}\hline\noalign{\smallskip}
Split-CIFAR100 &
CheckpointCD & $0.99 \pm 0.02$ & $1.00 \pm 0.00$ & $0.99 \pm 0.02$ & $1.00 \pm 0.00$ \\
& BayesCD(0.3) & $0.98 \pm 0.03$ & $0.99 \pm 0.02$ & $1.00 \pm 0.00$ & $1.00 \pm 0.00$ \\
& BayesCD(0.4) & $0.94 \pm 0.03$ & $1.00 \pm 0.00$ & $1.00 \pm 0.00$ & $1.00 \pm 0.00$ \\
& BayesCD(0.5) & $0.93 \pm 0.05$ & $1.00 \pm 0.00$ & $1.00 \pm 0.00$ & $1.00 \pm 0.00$ \\
& BayesCD(0.6) & $0.86 \pm 0.08$ & $0.99 \pm 0.02$ & $1.00 \pm 0.00$ & $1.00 \pm 0.00$ \\
& SimpleCD4.0 & $0.15 \pm 0.09$ & $0.81 \pm 0.05$ & $1.00 \pm 0.00$ & $0.99 \pm 0.02$ \\
\noalign{\smallskip}\hline\noalign{\smallskip}
Incr-Class-MNIST &
CheckpointCD & $0.74 \pm 0.12$ & $0.96 \pm 0.06$ & $0.97 \pm 0.05$ & $0.99 \pm 0.04$ \\
& BayesCD(0.3) & $0.74 \pm 0.16$ & $0.78 \pm 0.12$ & $0.88 \pm 0.08$ & $0.82 \pm 0.06$ \\
& BayesCD(0.4) & $0.64 \pm 0.13$ & $0.82 \pm 0.10$ & $0.82 \pm 0.10$ & $0.85 \pm 0.09$ \\
& BayesCD(0.5) & $0.61 \pm 0.13$ & $0.74 \pm 0.12$ & $0.75 \pm 0.08$ & $0.86 \pm 0.07$ \\
& BayesCD(0.6) & $0.51 \pm 0.13$ & $0.61 \pm 0.15$ & $0.81 \pm 0.12$ & $0.81 \pm 0.06$ \\
& SimpleCD & $0.05 \pm 0.06$ & $0.10 \pm 0.09$ & $0.60 \pm 0.18$ & $0.90 \pm 0.09$ \\
\noalign{\smallskip}\hline\noalign{\smallskip}
Incr-Class-CIFAR44 & 
CheckpointCD & $0.24 \pm 0.05$ & $0.82 \pm 0.05$ & $0.99 \pm 0.02$ & $0.99 \pm 0.02$ \\
& BayesCD(0.3) & $0.07 \pm 0.04$ & $0.45 \pm 0.04$ & $0.83 \pm 0.04$ & $0.96 \pm 0.02$ \\
& BayesCD(0.4) & $0.04 \pm 0.02$ & $0.37 \pm 0.09$ & $0.77 \pm 0.03$ & $0.92 \pm 0.03$ \\
& BayesCD(0.5) & $0.02 \pm 0.02$ & $0.27 \pm 0.06$ & $0.72 \pm 0.06$ & $0.88 \pm 0.04$ \\
& BayesCD(0.6) & $0.02 \pm 0.02$ & $0.20 \pm 0.05$ & $0.63 \pm 0.05$ & $0.86 \pm 0.05$ \\
& SimpleCD & $0.00 \pm 0.01$ & $0.01 \pm 0.01$ & $0.13 \pm 0.05$ & $0.76 \pm 0.04$ \\
\noalign{\smallskip}\hline
\end{tabular}
\end{center}
\end{small}
\end{table*}

\end{document}